% This must be in the first 5 lines to tell arXiv to use pdfLaTeX, which is strongly recommended.
\pdfoutput=1
% In particular, the hyperref package requires pdfLaTeX in order to break URLs across lines.

\documentclass[11pt]{article}

% Remove the "review" option to generate the final version.
\usepackage{emnlp22}

\setlength\titlebox{4.15cm}

% Standard package includes
\usepackage{times}
\usepackage{latexsym}
\usepackage{inconsolata}

% For proper rendering and hyphenation of words containing Latin characters (including in bib files)
\usepackage[T1]{fontenc}
\usepackage{hyperref}       % hyperlinks
\usepackage{url}            % simple URL typesetting
\usepackage{booktabs}       % professional-quality tables
\usepackage{amsfonts}       % blackboard math symbols
\usepackage{nicefrac}       % compact symbols for 1/2, etc.
\usepackage{microtype}      % microtypography
\usepackage{lipsum}		% Can be removed after putting your text content
\usepackage{graphicx}
\usepackage{natbib}
\usepackage{doi}

\usepackage{microtype}
\usepackage{graphicx}
\usepackage{subfigure}
\usepackage{booktabs}
\usepackage{hyperref}
\usepackage{caption}

% For theorems and such
\usepackage{amsmath}
\usepackage{amssymb}
\usepackage{mathtools}
\usepackage{amsthm}
\usepackage{thm-restate}
%Reference
\usepackage{cleveref}
\crefname{section}{\S}{\S\S}
\Crefname{section}{\S}{\S\S}
\crefformat{section}{\S#2#1#3}
\crefname{figure}{Fig.}{Fig.}
\crefname{alg}{Alg.}{Alg.}
\crefname{thm}{Theorem}{Theorems}
\crefname{line}{line}{lines}
\crefname{appendix}{App.}{}
\crefname{equation}{Eq.}{Eq.}
\crefname{defin}{Def.}{Defs.}
\crefname{tab}{Table}{Tables}
\crefname{prop}{Proposition}{Propositions}

\usepackage[disable, textsize=tiny]{todonotes}
\newcommand{\note}[4][]{\todo[author=#2,color=#3,size=\scriptsize,fancyline,caption={},#1]{#4}} % default note settings, used by macros below.

\newcommand{\ryan}[2][]{\note[#1]{ryan}{violet!40}{#2}}

\usepackage{xcolor}

% macros

\usepackage{xcolor}
\usepackage{colortbl}
% ETH colors
\definecolor{ETHBlue}{RGB}{33,92,175}   % blue
\definecolor{ETHGreen}{RGB}{98,115,19}      % green
\definecolor{ETHPurpleDark}{RGB}{140,10,89} % purple
\definecolor{ETHPurple}{RGB}{163,7,116} % purple
\definecolor{ETHGray}{RGB}{111,111,111} % gray
\definecolor{ETHRed}{RGB}{183,53,45}    % red
\definecolor{ETHPetrol}{RGB}{0,120,148} % green/blue
\definecolor{ETHBronze}{RGB}{142,103,19}    % bronze
\definecolor{ETHOrange}{RGB}{230, 100, 50}

\colorlet{MacroColor}{black}
\colorlet{MACROCOLOR}{black}

\newcommand{\mymacro}[1]{{\color{MacroColor} #1}}

\newcommand{\vbeta}{\boldsymbol \beta}
\newcommand{\vtheta}{{ \boldsymbol \theta}}
\newcommand{\vx}{\mathbf{x}}

\newcommand{\vw}{\boldsymbol{w}}
\newcommand{\hilbert}{\mathcal{H}}
\newcommand{\R}{\mathbb{R}}
\newcommand{\xx}{\boldsymbol{x}}
\newcommand{\yy}{\boldsymbol{y}}
\newcommand{\zz}{\boldsymbol{z}}

\newcommand{\xxn}{\xx_n}
\newcommand{\xxm}{\xx_m}

\newcommand{\xxi}{\xx_i}
\newcommand{\xxj}{\xx_j}

\newcommand{\defn}[1]{\textbf{#1}}

\newcommand{\phx}{\boldsymbol{\Phi}(\xx_n)}

\newcommand{\phxnprime}{\boldsymbol{\Phi}(\xx_{n'})}

\newcommand{\PP}{\mymacro{\boldsymbol{P}}}
\newcommand{\II}{\mymacro{\boldsymbol{I}}}
\newcommand{\WW}{\mymacro{\boldsymbol{W}}}
\newcommand{\UU}{\mymacro{\boldsymbol{U}}}
\newcommand{\SSigma}{\mymacro{\boldsymbol{\Sigma}}}
\newcommand{\targetdimension}{\mymacro{K}}

\newcommand{\phz}{\boldsymbol{\Phi}(\zz)}
\newcommand{\phxn}{\boldsymbol{\Phi}(\xxn)}
\newcommand{\phxnyst}{\boldsymbol{\widetilde{\Phi}}(\xx_n)}
\newcommand{\phnyst}{\boldsymbol{\widetilde{\Phi}}}

\newcommand{\kernel}{\mymacro{\kappa}}
\newcommand{\bigO}{\mathcal{O}}
\newcommand{\K}{\mymacro{\boldsymbol{K}}}
\newcommand{\Km}{\mymacro{\boldsymbol{K}^{(m)}}}
\newcommand{\Kmn}{\mymacro{\boldsymbol{K}^{(m, n)}}}
\newcommand{\rank}{\mymacro{\mathrm{rank}}}

\newcommand{\thetaphi}{\langle \vtheta, \phzproj \rangle}

\newcommand{\phzproj}{{\boldsymbol \Phi_{\text{proj}}(\zz)}}
\newcommand{\phiproj}{{\boldsymbol \Phi_{\text{proj}}}}

\newcommand{\vphi}{\boldsymbol{\Phi}}
\newcommand{\vphitilde}{\widetilde{\boldsymbol{\Phi}}}
\newcommand{\phxi}{\boldsymbol{\Phi}(\vx_i)}

\newcommand{\degree}{\texttt{d}}
\newcommand{\Pmat}{\mathrm{P}}

\newcommand{\projcomp}{\mathrm{P}_{\vw}^{\perp}}
\newcommand{\identity}{\mathrm{I}}

\newcommand{\valpha}{\boldsymbol{\alpha}}

\newcommand{\nystrom}{Nystr\"{o}m}
\newtheorem{theorem}{Theorem}%[section]
\newtheorem{proposition}{Proposition}%[section]

\newcommand{\sumi}{\sum_{n=1}^N}

\newcommand{\flambda}{f_{\lambda}}

\newcommand{\poly}{{\small \textsf{Poly}}}
\newcommand{\rbf}{{\small \textsf{RBF}}}
\newcommand{\laplace}{{\small \textsf{Laplace}}}
\newcommand{\linear}{{\small \textsf{Linear}}}
\newcommand{\mlp}{{\small \textsf{MLP}}}
\newcommand{\sigmoid}{{\small \textsf{Sigmoid}}}
\newcommand{\multiple}{{\small \textsf{Multiple}}}
\newcommand{\easymkl}{{\small \textsf{EasyMKL}}}
\newcommand{\uniformmk}{{\small \textsf{UniformMK}}}
\newcommand{\defeq}[0]{\mathrel{\stackrel{\textnormal{\tiny def}}{=}}}

\newcommand{\spanmat}{\texttt{span}}

\DeclareMathOperator*{\argmin}{\mathop{\text{argmin}}}

\usepackage{microtype}

\newcommand{\loss}{\mymacro{\ell}}

\newcommand{\yn}{y_n}

\newcommand{\projk}{\mathcal{P}_k}
\newcommand{\ynhat}{\widehat{y}_n}
\newcommand{\dataset}{{\mathcal{D}}}

% https://ctan.math.illinois.edu/macros/latex/contrib/thmtools/doc/thmtools-manual.pdf

    \theoremstyle{plain}
    \newtheoremstyle{TheoremNum}
        {\topsep}{\topsep}              %%% space between body and thm
        {\itshape}                      %%% Thm body font
        {}                              %%% Indent amount (empty = no indent)
        {\bfseries}                     %%% Thm head font
        {.}                             %%% Punctuation after thm head
        { }                             %%% Space after thm head
        {\thmname{#1}\thmnote{ \bfseries #3}}%%% Thm head spec
    \theoremstyle{TheoremNum}

%\newtheorem{lemma}{Lemma}

% space hack
%\setlength{\belowcaptionskip}{-5pt}

\title{Kernelized Concept Erasure}
 
 \author{Shauli Ravfogel\textsuperscript{\normalfont1,2} \, Francisco Vargas\textsuperscript{\normalfont 3} \, Yoav Goldberg\textsuperscript{\normalfont1,2}  \, Ryan Cotterell\textsuperscript{\normalfont 4}\\
\textsuperscript{1}Bar-Ilan University \, \textsuperscript{2}Allen Institute for Artificial Intelligence \\
\textsuperscript{3}University of Cambridge \, \textsuperscript{4}ETH Zürich \\ 
  {\tt\{\href{mailto:shauli.ravfogel@gmail.com}{shauli.ravfogel}, \href{mailto:yoav.goldberg@gmail.com}{yoav.goldberg}\}@gmail.com} \\
   \tt{\href{mailto:fav25@cam.ac.uk}{fav25@cam.ac.uk}}  \tt{\href{mailto:ryan.cotterell@inf.ethz.ch}{ryan.cotterell@inf.ethz.ch}}
  }
\begin{document}
\maketitle
\begin{abstract}

The representation space of neural models for textual data emerges in an unsupervised manner during training.
Understanding how those representations encode human-interpretable concepts is a fundamental problem.
One prominent approach for the identification of concepts in neural representations is searching for a linear subspace whose erasure prevents the prediction of the concept from the representations.
However, while many linear erasure algorithms are tractable and interpretable, neural networks do not necessarily represent concepts in a linear manner. To identify non-linearly encoded concepts, we propose a kernelization of a linear maximin game for concept erasure. We demonstrate that it is possible to prevent specific non-linear adversaries from predicting the concept. However, the protection does not transfer to different non-linear adversaries. Therefore, exhaustively erasing a non-linearly encoded concept remains an open problem.\looseness=-1

\vspace{.5em}
\hspace{.25em}\includegraphics[width=1.25em,height=1.25em]{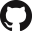}\hspace{.75em}\parbox{\dimexpr\linewidth-4\fboxsep-2\fboxrule}{\sloppy\small \url{https://github.com/shauli-ravfogel/adv-kernel-removal}
}
%\vspace{-.5em}

\end{abstract}

\newcommand{\word}[1]{\textsc{#1}}
\section{Introduction}

% introduction to concept removal
Large neural networks in NLP produce real-valued representations that encode the bit of human language that they were trained on, e.g., words, sentences, or text grounded in images.
For instance, GloVe \citep{DBLP:conf/emnlp/PenningtonSM14} produces real-valued representations of isolated words, BERT \citep{devlin-etal-2019-bert} produces real-valued representations of sentences, and VilBERT produces real-valued representations of visually grounded language \citep{vilbert, bugliarello-etal-2021-multimodal}.
These real-valued representations naturally encode various properties of the objects they represent.
For instance, a good representation of the first author's laptop computer ought to encode its manufacturer, size, and color somewhere among its real values.\looseness=-1

We now describe the premise of our paper in more detail.
We adopt the notion of a \defn{concept} due to \citet{gardenfors}.
For G{\"a}rdenfors, objects can be thought of as having a geometric representation.
Different dimensions in the representation space that objects inhabit might correspond to their color, size, and shape.
G{\"a}rdenfors then goes further and defines a \defn{concept} as a convex region of the representation space.\footnote{In this work, we exclusively focus on linear subspaces, which are convex, rather than more general convex regions.}
Building on G{\"a}rdenfors' notion of a concept, this paper studies a task that we refer to as \defn{concept erasure}.

 We now motivate the task more formally by extending
 the example.
 Imagine we have $N$ different laptops, whose real-valued representations are denoted as $\xx_1, \dots, \xx_N$.
 Now, consider concept labels $y_1,\ldots, y_N$ that encode each laptop's color and are taken from the set $\{\textsc{grey}, \textsc{silver}, \textsc{black}, \textsc{white}\}$.
In the concept erasure paradigm, we seek an \defn{erasure function} $r(\cdot)$ such that $r(\xx_1),\ldots,r(\xx_N)$ are \emph{no longer} predictive of the colors of the laptops, $y_1,\ldots,y_N$, but \emph{retain} all the other information encoded in the original representations, i.e., they remain predictive with respect to other laptop-related concepts.
Our hope is that the geometry of the erasure function $r(\cdot)$ then tells us the structure of the laptop color concept.\looseness=-1

% relationship of concept removal to concept identification
Concept erasure is tightly related to concept identification.
Once we have successfully removed a given concept, e.g., color, from a representation $\xx_n$, it is reasonable to argue that the erasure function $r$ has meaningfully identified the concept within the representation space. In the rest of the paper, we will say that $r(\cdot)$ \defn{neutralizes} the concept in the representation space.
For instance, we say that the $r$ in our example neutralizes the concept of a laptop's color.
It follows that concept identification is related to bias mitigation \cite{bolukbasi2016man,gonen-goldberg-2019-lipstick,hall-maudslay-etal-2019-name}, e.g., one may want to identify and remove the gender encoded in learned word representations produced by an embedding method such as word2vec \citep{DBLP:journals/corr/abs-1301-3781} or GloVe \citep{ DBLP:conf/emnlp/PenningtonSM14}.
Indeed, the empirical portion of this paper will focus on removing gender from word representations in order to mitigate bias.\looseness=-1

% previous work = linear, ours is non-linear
Previous work on concept erasure \citep{ravfogel-etal-2021-counterfactual} focuses on the \emph{linear} case, i.e., where $r$ is a linear function. While linear concept erasure methods have certainly found success \cite{bolukbasi2016man}, there is no \textit{a-priori} reason to suspect that neural networks encode concepts in a linear manner. In this work, we take the first step toward the goal of identifying a \emph{non-linear} function $r(\cdot)$ and a corresponding non-linearly encoded concept subspace.
We directly build on \citet{rlace}, who cast linear concept erasure as a maximin game. 
Under their formulation, the function $r(\cdot)$ learns to remove the concept, while an adversary tries to predict the concept.
We extend their work by deriving a class of general maximin games based on kernelization that largely maintains the tractability of the linear approach.
Our kernelized method performs concept erasure in a reproducing kernel Hilbert space, which may have a much higher dimensionality \citep{smola1998learning} and correspond to a non-linear subspace of the original representation space. 

% discussion of results
 Empirically, we experiment with gender erasure from GloVe and BERT representations.
 We show that a kernelized adversary can classify the gender of the representations with over 99\% accuracy if $r(\cdot)$ is taken to be a linear function.
 This gives us concrete evidence that gender is indeed encoded non-linearly in the representations. 
 We further find that solving our kernelized maximin game yields an erasure function $r(\cdot)$ that protects against an adversary that shares the same kernel.
 %non-linear removal formulation effectively defends specific non-linear adversaries, i.e., remove information such that a given polynomial or RBF kernel could not recover it. 
 However, we also find that it is difficult to protect against \emph{all} kernelized adversaries at once: Information removed by one kernel type can be recovered by adversaries using other kernel types. 
 That is, the gender concept is not exclusively encoded in a space that corresponds to any one kernel.
 This suggests that non-linear concept erasure is very much an open problem.\looseness=-1

\section{Linear Concept Erasure}
\label{sec:formulation}
We provide an overview of the linear maximin formulation before we introduce its kernelization.

\subsection{Notation}
Let $\dataset = \{(\xx_n, y_n)\}_{n=1}^N$ be a dataset of $N$ concept--representation pairs, where the labels $y_n$ represent the concept to be neutralized.
The goal of linear concept erasure is to learn a \emph{linear} erasure function $r(\cdot)$ from $\dataset$ such that it is impossible to predict $y_n$ from the modified representations $r(\xx_n)$.
We focus on classification, where the concept labels $y_n$ are derived from a finite set $\{1, \dots, V\}$ of $V$ discrete values, and the representations $\xx_n \in \R^D$ are $D$-dimensional real column vectors.
To predict the concepts labels $y_n$ from the representations $\xx_n$, we make use of (and later kernelize) classifiers that are \defn{linear models}, i.e., classifiers of the form $\ynhat = \vtheta^{\top} \xx_n$ where $\vtheta \in \Theta \subseteq \R^D$ is a column vector of parameters that lives in a compact set $\Theta$.
We also consider an arbitrary \defn{loss function} $\loss(\cdot, \cdot) \geq 0$ where $\loss(y_n, \ynhat)$ tells us how close the prediction $\ynhat$ is to $y_n$.\looseness=-1

Using this notation, linear concept erasure is realized by identifying a linear concept subspace whose neutralization, achieved through an orthogonal projection matrix, prevents the classifier from predicting the concept.
We define $\projk$ as the set of all $D \times D$ orthogonal projection matrices that neutralize a rank $k$ subspace.
More formally, we have that $\PP \in \projk \iff \PP=\II_D - \WW^{\top}\WW, \WW \in \R^{\targetdimension \times D}, \WW^{\top} = \II_{\targetdimension}$, where $\II_{\targetdimension}$ denotes the $\targetdimension \times \targetdimension$ identity matrix and $\II_D$ denotes the $D \times D$ identity matrix. 
We say that the matrix $\PP$ neutralizes the $\targetdimension$-dimensional rowspace of $\WW$.
\subsection{A Linear Maximin Game}
Following this formalization, it is natural to define a maximin game \citep{von1947theory} between a projection matrix   that aims to remove the concept subspace, and a linear model parameterized by $\vtheta$ that aims to recover it:
\begin{equation}
    \max_{{\PP \in \projk}}\,\min_{\vtheta \in \Theta}\,\sum_{n=1}^N \loss\Big(\yn,\, \vtheta^{\top} \PP\xx_n\Big). \label{eq:main-game}
\end{equation}
\Cref{eq:main-game} is a sequential game.
The first player chooses an orthogonal projection matrix $\PP$ to hinder the prediction of $\yn$ from $\xx_n$.
In response, the second player chooses the parameters of a linear model $\vtheta$, with knowledge of $\PP$, to best predict $\yn$ from $\PP \xx_n$.
This is a special case of the general adversarial framework \citep{DBLP:journals/corr/GoodfellowPMXWOCB14}.
However, in our case, the predictor $\vtheta$ does \emph{not} interact with the original input $\xx_n$.
Instead, the classifier attempts to predict the concept label $y_n$ from $\PP \xx_n$. 
We now give a concrete example of the linear concept: When we have a binary logistic loss ($V=2$) and a $\targetdimension$-dimensional neutralized subspace, the game takes the following form:\looseness=-1
\begin{equation}
\begin{aligned}
   \max_{\PP \in \projk}& \,\min_{\vtheta \in \Theta} \,\, - \Bigg( \sum_{n=1}^N \yn \log \frac{\exp \vtheta^{\top} \PP \xx_n} {1 + \exp \vtheta^{\top}\PP \xx_n} \\
   & + \Big (1-\yn \Big) \log \frac{1} {1 + \exp \vtheta^{\top}\PP \xx_n}  \Bigg).
\end{aligned}
\end{equation}

In this paper, we focus on $\targetdimension=1$, i.e., we aim to identify a 1-dimensional concept subspace.\footnote{This construction can be extended to subspaces of dimension $k > 1$ using a version of the Gram--Schmidt process in the RKHS.}
Every $D\times D$ orthogonal projection matrix of rank $D-1$ can be written as follows 
\begin{equation}
\PP^{\bot}_{\vw} \defeq \II - 
\frac{\vw \vw^{\top}}{\vw^{\top} \vw},
\end{equation}
itself a differentiable function of $\vw$.
This results in the following game:
\begin{equation}
   \!\!\!\!\max_{\vw \in \R^D} \min_{\vtheta \in \Theta}\,\sum_{n=1}^N \loss\left(\yn,\, \vtheta^{\top} \PP^{\bot}_{\vw} \xx_n\right). \label{eq:relaxed-game}
\end{equation}\ryan{Why did we switch form $\PP$ to $\PP^{\top}$}

\section{Non-linear Concept Erasure} \label{sec:non-linearerasure}
It has often been shown that human-interpretable concepts, and in particular gender, are encoded non-linearly in the representation space \citep{gonen-goldberg-2019-lipstick,ravfogel-etal-2020-null}.
However, prior work on concept erasure, e.g., the method discussed in \Cref{sec:formulation}, assumes that concepts are encoded linearly.
Our goal is to extend the game  defined in \cref{eq:relaxed-game} to be able to neutralize non-linearly encoded concepts while preserving the relative tractability and interpretability of the linear methods. 
A natural manner through which we can achieve these goals is by kernelization \citep{shawe2004kernel,hofmann2008kernel}.\looseness=-1

The underlying assumption motivating kernel methods is that the features needed for the task live in a reproducing kernel Hilbert space \citep[RKHS;][]{aronszajn50reproducing}.
At an intuitive level, an RKHS allows us to extend some of results from linear algebra to potentially infinite-dimensional inner prodcut spaces \citep{canu2006kernel}.
The main technical contribution of this paper is the derivation of a kernelized version of the linear adversarial game presented in \cref{eq:relaxed-game}. We perform this derivation in this section after providing some background on kernel methods. We show that the resulting kernelized game is both non-convex and too computationally heavy to solve directly.
Thus, we introduce a \nystrom\ approximation that results in an efficient and light-weight formulation, given in \cref{eq:relaxed-game-nystrom}, that is still able to isolate concepts encoded non-linearly.\looseness=-1

\subsection{Background: Kernel Methods}
Kernel methods are based on reproducing kernel Hilbert spaces (RKHS).
Without going into the technical details, an RKHS is a space of ``nice'' functions equipped with a kernel \citep{yosida2012functional}.
A \defn{kernel} $\kernel(\cdot, \cdot) \geq 0$ is a similarity measure that generalizes the notion of a positive definite matrix. 
When we have a kernel over an RKHS, the kernel corresponds to the dot product of a feature map, i.e., $\kernel(\xx, \yy) = \vphi(\xx)^{\top} \vphi(\yy)$, for feature map $\vphi$.
This insight gives us a natural manner to construct kernels.
For instance, the linear kernel $\kernel(\xx, \yy) = \xx^{\top}\yy$ corresponds to the standard dot product in Euclidean space. The degree-2 polynomial kernel $\kernel(\xx,\yy) = {(\gamma \xx^{\top} \yy + \alpha)}^2$ corresponds to a dot product in a six-dimensional feature space if $\xx,\yy \in \mathbb{R}^2$ where $\gamma \in \R$ is a hyperparameter.\looseness=-1

Kernels are more general than positive definite matrices in that they can exist in infinite-dimensional spaces. For example, the Gaussian kernel $\exp\left(-\gamma ||\xx-\yy||_2^2\right)$ is infinite-dimensional, where, again, $\gamma \geq 0$ is a hyperparameter.\footnote{One way to see this is to show a Gram matrix of any size $N$ will always have full rank.}
However, for any finite set of points $\{\xx_1, \dots, \xx_N \}$, we can construct a \defn{Gram matrix} $\K \in \R^{N \times N}$ where $\Kmn = \kernel(\xx_n, \xx_m)$ encodes the similarity between $\xx_n$ and $\xx_m$.
The matrix $\K$ is guaranteed to be positive definite.
Kernels are useful because they allow us to implicitly learn functions in a potentially infinite-dimensional RKHS without materializing that space.\looseness=-1

\subsection{A Kernelized Maximin Game}
Inspection of the maximin game in \cref{eq:main-game} reveals that both the adversary and the predictor interact with the input only via an inner product. Thus, the game can be kernelized by replacing the inner product with a kernel operation. 
We establish this kernelization by first proving the following representer theorem-like lemma, which shows that $\vw$ and $\vtheta$, now considered to live an RKHS $\hilbert$, can be written in terms of spans of the projected training set.\looseness=-1

\begin{restatable}{lemma}{representer}

\label{lemm:maximin} (Maximin Game Representer Theorem)
Let $\hilbert$ be a reproducing kernel Hilbert space with canonical feature map $\vphi \colon \R^D \rightarrow \hilbert$, i.e., $\vphi(\vx) = \kernel(\vx, \cdot)$.
Consider the following maximin game: 
\begin{align}\label{eq:kernel-game-alternative}
   &\max_{\vw \in \hilbert} \min_{
   \vtheta \in \hilbert} \sum_{n=1}^N \loss\left(\yn,\, \langle \vtheta, \projcomp\, \phx \rangle \right).
\end{align}
where $\projcomp \colon \hilbert \rightarrow \hilbert$ is the operator that projects onto the orthogonal complement of $\vw$.
For every attained local optimum $(\vtheta^\star$, $\vw^\star)$ of \Cref{eq:kernel-game-alternative}, there is another local optimum $(\vtheta_U^\star$, $\vw_U^\star)$ with the same value as $(\vtheta^\star$, $\vw^\star)$ in $U \defeq \spanmat \Big\{\vphi(\vx_1), \hdots, \vphi(\vx_N)\Big\}$, the span of the training data.\footnote{Traditionally, a representer theorem has an extra regularization term on the function estimator that ensures the optimum is attained and that it is global \citep{kimeldorf1970correspondence}. We do not employ such a regularizer and, thus, have a weaker result.}\looseness=-1

\end{restatable}
\begin{proof}
See \Cref{appdx:repr} for the proof.
\end{proof}

Having expressed $(\vw,\vtheta)$ as a function of the training data, we can proceed to proving the kernelization of the adversarial game.

\begin{restatable}{lemma}{kernelization}
\label{lemma:kernel_game_1}
Let $\hilbert$ be a reproducing kernel Hilbert space with canonical feature map $\vphi$, and let $\phz$ be a point in $\hilbert$.
Next, let  $\vw = \sum_{n=1}^N \alpha_{n} \phxn$ and $\vtheta = \sum_{n=1}^N \beta_n \phxn$ be points in the reproducing kernel Hilbert space.
Now, let $\phzproj$ be the orthogonal projection of $\phz$ onto the orthogonal complement of the subspace spanned by $\vw$.
Then, we have \looseness=-1
\begin{equation}
\begin{aligned}
\thetaphi 
= \sum_{m=1}^N &\beta_m \bigg(\kernel(\xxm, \zz) \\
&-\frac{\valpha^{\top} \Km(\zz) \valpha}{\valpha^{\top} \K \valpha} \bigg),
   \label{eq:prediction-kernel-space}
\end{aligned}
\end{equation}
where $\K^{(m)}_{ij}(\zz) \defeq \kernel(\xxi, \zz) \kernel(\xxm, \xxj)$. 
\end{restatable}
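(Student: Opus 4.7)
The plan is to compute $\phzproj$ explicitly as a rank-one orthogonal projection in the RKHS, then expand $\langle \vtheta, \phzproj \rangle$ using the reproducing property $\langle \vphi(\xx), \vphi(\yy) \rangle_{\hilbert} = \kernel(\xx, \yy)$, and finally identify the resulting double sum with the quadratic form $\valpha^{\top} \Km(\zz) \valpha$.

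First I would write down the projection formula. Since $\vw$ spans a one-dimensional subspace of $\hilbert$, the orthogonal projection of $\phz$ onto its orthogonal complement is
\begin{equation*}
\phzproj = \phz - \frac{\langle \vw, \phz \rangle}{\langle \vw, \vw \rangle}\, \vw,
\end{equation*}
provided $\vw \neq 0$ (the degenerate case $\vw = 0$ reduces trivially to $\phzproj = \phz$). Next I would substitute $\vw = \sum_{i=1}^N \alpha_i \phi(\xx_i)$ and use bilinearity together with the reproducing property to rewrite the three inner products appearing on the right-hand side: $\langle \vw, \phz \rangle = \sum_i \alpha_i \kernel(\xx_i, \zz)$, $\langle \vw, \vw \rangle = \valpha^{\top} K \valpha$, and $\langle \vtheta, \vw \rangle = \sum_{m,j} \beta_m \alpha_j \kernel(\xx_m, \xx_j)$, and similarly $\langle \vtheta, \phz \rangle = \sum_m \beta_m \kernel(\xx_m, \zz)$.

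Plugging these into $\langle \vtheta, \phzproj \rangle = \langle \vtheta, \phz \rangle - \frac{\langle \vw, \phz \rangle}{\langle \vw, \vw \rangle} \langle \vtheta, \vw \rangle$ and pulling $\sum_m \beta_m$ out of both terms yields
\begin{equation*}
\langle \vtheta, \phzproj \rangle = \sum_{m=1}^N \beta_m \left( \kernel(\xx_m, \zz) - \frac{1}{\valpha^{\top} K \valpha} \sum_{i,j} \alpha_i \alpha_j\, \kernel(\xx_i, \zz)\, \kernel(\xx_m, \xx_j) \right).
\end{equation*}
The final step is simply to recognize that the inner double sum is exactly $\valpha^{\top} \Km(\zz)\valpha$ under the definition $\Km_{ij}(\zz) = \kernel(\xx_i, \zz)\kernel(\xx_m, \xx_j)$, which gives the claimed identity.

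I do not anticipate a real obstacle here: the content is essentially bookkeeping once the rank-one projection is written down, and the reproducing property handles every inner product in closed form. The only conceptual point worth flagging is that the fraction $\valpha^{\top} \Km(\zz)\valpha / (\valpha^{\top} K \valpha)$ depends on $m$ through $\Km(\zz)$ but not through the denominator, so it is essential to keep $m$ outside the quadratic form and to verify that the index $m$ in the numerator really does come from $\langle \vtheta, \vw \rangle$ rather than from the projection coefficient.
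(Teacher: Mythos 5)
Your proposal is correct and follows essentially the same route as the paper: write the rank-one orthogonal projection as $\phz - \frac{\langle \vw, \phz\rangle}{\langle \vw,\vw\rangle}\vw$, expand all four relevant inner products via the reproducing property, and regroup the resulting triple sum into $\sum_m \beta_m(\cdot)$ with the double sum over $i,j$ identified as $\valpha^\top \Km(\zz)\valpha$. The paper's version is slightly more long-winded (it starts from the operator form $\identity - \vw\vw^\top/\vw^\top\vw$ and carries symbolic $\vphi(\cdot)$ terms a few steps further before collapsing them), but the decomposition and the bookkeeping are identical; your observation about which index sits outside the quadratic form is exactly the point the paper's final regrouping step makes implicitly.
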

\begin{proof}
See \cref{app:kernelization} for the proof.
\end{proof}

\Cref{lemma:kernel_game_1} suggests the following form of the kernelized game given in \Cref{eq:kernel-game-alternative}:
\begin{equation}\label{eq:game-kernel2}
\begin{aligned}
\max_{\valpha \in \R^N}\, \min_{\vbeta \in \R^N}\,\sum_{n=1}^N &\loss\Bigg(\yn,\sum_{m=1}^N \beta_m \bigg(\kernel(\xxm, \zz_n)
\\
&-\frac{\valpha^{\top} \Kmn \valpha}{\valpha^{\top} K \valpha} \bigg)\Bigg),
\end{aligned}
\end{equation}
where we define $\K^{(m,n)}_{ij}(\zz) \defeq \kernel(\xxi, \zz_n) \kernel(\xxm, \xxj)$.
In contrast to \Cref{eq:kernel-game-alternative}, all computations in \Cref{eq:game-kernel2} are in Euclidean space. 
\begin{theorem}
The reproducing kernel Hilbert space game \Cref{eq:kernel-game-alternative} attains the same local optima as \Cref{eq:game-kernel2}.
\end{theorem}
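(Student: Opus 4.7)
The plan is to combine \Cref{lemm:minimax} and \Cref{lemma:kernel_game_1} into a value-preserving correspondence between local optima of the two games. First I would invoke the representer-style lemma to argue that it suffices to consider local optima of \Cref{eq:kernel-game-alternative} whose parameters lie in $U \defeq \spanmat\{\vphi(\vx_1), \ldots, \vphi(\vx_N)\}$: every attained local optimum in the full RKHS has a matching local optimum in $U$ with the same objective value, so the set of attainable local values of the RKHS game coincides with that of its restriction to $U \times U$.

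Next I would parametrize this restriction via the surjection $\Pi \colon \R^{N} \times \R^{N} \to U \times U$ defined by $\Pi(\valpha, \vbeta) = \bigl(\sum_n \alpha_n \vphi(\xx_n), \sum_n \beta_n \vphi(\xx_n)\bigr)$. Applying \Cref{lemma:kernel_game_1} at each training point $\zz = \xx_n$ shows that $\langle \vtheta, \projcomp \phxn \rangle$ evaluates exactly to the bracketed expression inside the loss of \Cref{eq:game-kernel2}. Hence the RKHS objective at $\Pi(\valpha, \vbeta)$ equals the Euclidean objective at $(\valpha, \vbeta)$; the two games share both their domain (up to $\Pi$) and their objective, so their local optima are in bijection modulo the fibers of $\Pi$.

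The forward direction is immediate: a local optimum in $U \times U$ pulls back along $\Pi$ to a local optimum in $\R^{N} \times \R^{N}$ by the equality of objectives. For the converse, I would push a local optimum $(\valpha^{*}, \vbeta^{*})$ of \Cref{eq:game-kernel2} forward to $\Pi(\valpha^{*}, \vbeta^{*})$ and then lift it from a local optimum within $U \times U$ to one in the full $\hilbert \times \hilbert$, by showing that perturbations into $U^{\perp}$ change neither player's payoff.

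The principal obstacle is this final lifting step. Since $\projcomp$ depends on $\vw$, moving $\vw$ in the $U^{\perp}$ direction enlarges the projected-out subspace, and one must verify that the additional removed direction lies entirely in $U^{\perp}$ so that $\langle \vtheta, \projcomp \phxn \rangle$ is preserved for every $\vtheta \in U$ and every training $\xx_n$; a symmetric argument handles $\vtheta$-perturbations. An additional technical subtlety is that when the Gram matrix $K$ is rank-deficient the denominator $\valpha^{\top} K \valpha$ in \Cref{eq:game-kernel2} can vanish and $\Pi$ is non-injective, which I would address by quotienting $\R^N$ by $\ker K$ (so that the induced $\vw$ is well-defined and nonzero) and invoking \Cref{lemm:minimax} once more to absorb any remaining ambiguity on the fibers.
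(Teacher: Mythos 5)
Your proposal follows the same two-step route the paper sketches: invoke \Cref{lemm:minimax} to restrict attention to the span $U$, then apply \Cref{lemma:kernel_game_1} at each training point to identify the RKHS objective on $U \times U$ with the finite-dimensional objective in $(\valpha, \vbeta)$. The paper's own proof is two sentences and does not spell out the correspondence in either direction, so your expansion is faithful in spirit, and you are right to notice that \Cref{lemm:minimax} as stated only covers one direction (full space $\to$ $U$), leaving the converse lift implicit.

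The concrete problem is in your proposed resolution of that lift. You claim one can promote a local optimum from $U \times U$ to $\hilbert \times \hilbert$ ``by showing that perturbations into $U^{\perp}$ change neither player's payoff.'' This is true for the $\vtheta$-player: if $\vu \in U^{\perp}$, then $\langle \vtheta + \delta\vu,\, \projcomp \phxn \rangle = \langle \vtheta,\, \projcomp \phxn \rangle$, because $\projcomp \phxn$ lies in the span of $U$ and $\vw$. It is \emph{not} true for the $\vw$-player. Writing $\vw \mapsto \vw + \epsilon \vv$ with $\vv \in U^{\perp}$ and $\vw \in U$, the numerator $\langle \phxn, \vw + \epsilon\vv\rangle = \langle \phxn, \vw\rangle$ is unchanged since $\phxn \in U$, but the normalization in
\begin{equation*}
\projcomp \phxn \;=\; \phxn - \frac{\langle \phxn, \vw + \epsilon\vv \rangle}{\|\vw\|^2 + \epsilon^2\|\vv\|^2}\,(\vw + \epsilon\vv)
\end{equation*}
is inflated from $\|\vw\|^2$ to $\|\vw\|^2 + \epsilon^2\|\vv\|^2$, so $\langle \vtheta, \projcomp\phxn\rangle$ shifts by a generically nonzero $O(\epsilon^2)$ amount. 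The payoff \emph{does} move, just not at first order. Thus stationarity in the $U^{\perp}$ direction is preserved, but whether $(\vw^*_U, \vtheta^*_U)$ remains a genuine local maximizer for the $\vw$-player in the enlarged space depends on the sign of that second-order term, which is data- and loss-dependent. To close this you would need either to weaken ``local optimum'' to ``stationary point,'' to constrain $\vw$ (e.g.\ to unit norm, or to $U$ itself as the parameterization $\vw = \sum_n \alpha_n \phxn$ implicitly does), or to argue the sign of the second-order term directly. The paper does not address this either, so your proposal is no less rigorous than the source, but the specific justification you give for the lifting step would not survive scrutiny as written.
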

\begin{proof}
First, plug the result of \Cref{lemma:kernel_game_1} into \Cref{eq:kernel-game-alternative}.
Optimality follows by \Cref{lemm:maximin}, the representer lemma.
\end{proof}
\noindent Now, we turn to the runtime of the game.
\begin{proposition}\label{prop:naive-runtime}
The objective in \Cref{eq:game-kernel2} can be computed in $\bigO(N^4)$ time.
\end{proposition}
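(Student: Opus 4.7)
The plan is to bound the cost of evaluating \Cref{eq:game-kernel2} by walking from the innermost quantities outward, under the standard assumption that each kernel evaluation $\kernel(\cdot,\cdot)$ costs $\bigO(1)$. First I would precompute, once per objective evaluation, the Gram matrix $K \in \R^{N\times N}$ of the training data and the scalar denominator $\valpha^\top K \valpha$; both cost $\bigO(N^2)$ and neither depends on the indices $n$ or $m$, so they contribute only an additive constant to the final bound.

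Next I would account for the per-pair work. For fixed outer index $n$ and inner index $m$, the numerator is a double sum
\[
\valpha^\top \Kmn \valpha \;=\; \sum_{i,j=1}^N \alpha_i\, \alpha_j\, \kernel(\xxi, \zz_n)\, \kernel(\xxm, \xxj),
\]
which a naive sweep over the $N^2$ index pairs $(i,j)$ evaluates in $\bigO(N^2)$ time. Dividing by the precomputed denominator, subtracting from $\kernel(\xxm, \zz_n)$, and scaling by $\beta_m$ are each $\bigO(1)$, so the whole per-pair contribution remains $\bigO(N^2)$.

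Finally I would aggregate: the inner sum over $m$ performs $N$ per-pair computations followed by a single loss evaluation $\loss(\yn,\cdot)$, yielding $\bigO(N^3)$ work per outer $n$; summing over the $N$ outer indices then produces the claimed $\bigO(N^4)$ total, with the $\bigO(N^2)$ precomputation absorbed.

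There is no real obstacle here — the argument is essentially an accounting exercise, and the $N^4$ rate emerges precisely from the naive treatment of $\Kmn$. The only subtlety worth flagging is that $\Kmn$ has a rank-one outer-product structure that could in principle collapse the per-pair cost to $\bigO(N)$ and give $\bigO(N^3)$ overall; the proposition, however, asserts only the weaker upper bound, which already motivates the \nystrom\ approximation introduced next.
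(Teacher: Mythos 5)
Your proof is correct and takes essentially the same route as the paper's: treat each kernel evaluation as $\bigO(1)$, observe that forming $\valpha^\top \Kmn \valpha$ naively costs $\bigO(N^2)$, and multiply by the $\bigO(N^2)$ pairs $(n,m)$ over which it must be evaluated, with the Gram matrix and denominator precomputed once. Your closing remark about the rank-one structure of $\Kmn$ is a valid observation (and in fact the quadratic form factorizes as $\bigl(\sum_i \alpha_i \kernel(\xxi,\zz_n)\bigr)\bigl(\sum_j \alpha_j \kernel(\xxm,\xxj)\bigr)$, permitting an even cheaper evaluation), but the paper does not pursue it and it does not affect the stated upper bound.
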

\begin{proof}
Assuming that $\kernel(\cdot, \cdot)$ may be computed in $\bigO(1)$, computing $\Kmn$ takes $\bigO(N^2)$ time.
We have to do $\bigO(N^2)$ such computations, which results in $\bigO(N^4)$ time.
Note that $K$ may be pre-computed once in $\bigO(N^2)$ time. 
Thus, $\Kmn$ is the bottleneck, so the whole algorithm takes $\bigO(N^4)$ time.\looseness=-1
\end{proof}

There are two problems with naïvely using the formulation in \Cref{eq:game-kernel2}.
First, and similarly to \Cref{eq:main-game}, directly optimizing \Cref{eq:game-kernel2} is difficult because the parameters $\valpha$ implicitly define an orthogonal projection matrix of rank 1 and, thus, we face the sample problems discussed in \citet{rlace}.
Second, the evaluation time of \Cref{eq:game-kernel2} is $\bigO\left(N^4\right)$, as argued in \Cref{prop:naive-runtime}, which makes using a training set larger than a few hundred examples infeasible.
We solve both of these computational issues
with the Nystr{\"o}m approximation.

\subsection{The Nystr\"{o}m Approximation}
\label{sec:nystrom}
The general idea behind the Nystr\"{o}m method \citep{nystrom1930praktische} is to calculate a low-rank approximation of the kernel matrix.
It is a commonly used technique for improving the runtime of kernel methods \citep{williams2001using}.\looseness=-1

\subsubsection{Relaxing the Objective}

Consider the Gram matrix $\K \in \R^{N \times N}$ where $\Kmn = \kernel(\xxm, \xxn)$, $\xxm$ is the $m^{\text{th}}$ training representation, and $\xxn$ is the $n^{\text{th}}$ training representation.
We start with the eigendecomposition of $\K = \UU \SSigma \UU^{\top} = \UU \sqrt{\SSigma} \sqrt{\SSigma} \UU^{\top}$, which can be computed in $\bigO\left(N^3\right)$ time \citep{golub2013matrix}.
We are justified in taking the square root of $\SSigma$ because $\K$ is necessarily positive definite.
Now, we define an approximate feature map for observations $\xxn$ in the training data using the eigenvalues and vectors:\looseness=-1
\begin{equation}\label{eq:new-features}
\phxnyst \defeq \left(\UU \sqrt{\SSigma}\right)_n 
\end{equation}
To compute the features for representations $\xx$ \emph{not} in the training data, we use the following: 
\begin{equation}
\vphitilde\left(\xx \right) \defeq \sum_{n=1}^N \kernel(\xx, \xxn) \widetilde{\vphi}(\xx_n)
\end{equation}
which is an average, weighted by the kernel $\kappa(\cdot, \cdot)$, of the features obtained during training. Plugging \Cref{eq:new-features} into \Cref{eq:relaxed-game} yields: 
\begin{equation} 
  \max_{\vw \in \R^D}\, \min_{\vtheta \in \Theta}\,\sum_{n=1}^N \loss\Big(\yn,\, \langle \vtheta, \PP_{\vw}^{\bot} \phxnyst \rangle \Big) 
   \label{eq:relaxed-game-nystrom}
\end{equation}
which is identical to the linear game in \Cref{eq:relaxed-game}, except that it uses the transformed features $\phxnyst$ to approximate the true feature map $\vphi$.

While neither the game in \Cref{eq:relaxed-game} nor the version of  \Cref{eq:relaxed-game} with the \nystrom\ approximation given in \cref{eq:relaxed-game-nystrom} is convex--concave, we are still able to approximately solve the game in practice.
Specifically, we consider the convex relaxation of the set of rank-$\targetdimension$ orthogonal projection matrices given by \citet{rlace}.
Then, after applying the relaxation, we apply a gradient-based optimization procedure. 
The runtime of this procedure for this game over $T$ epochs is now $\bigO\left(TN^2 + N^3\right)$.\footnote{Calculating the \nystrom\ features entails computing the eigendecomposition of the Gram matrix, which is $\bigO\left(N^3\right)$. Then, we calculate the objective over all $N$ examples for $T$ epochs, which is $\bigO\left(TN^2\right)$.} 
While this is an improvement over $\bigO\left(TN^4\right)$, it is still not fast enough to be of practical use.

\subsubsection{Improving the Runtime}
The runtime bottleneck of the game in \Cref{eq:relaxed-game-nystrom} is the $\bigO\left(N^3\right)$ time it takes to compute the eigendecomposition of the Gram matrix $K$. 
Under the assumption that $\rank\left(\K\right) = L$, we can improve this bound to
$\bigO\left(L^3 + L^2 N \right)$ \citep{10.5555/1046920.1194916}.
For the case that $L \ll N$, this is a substantial improvement. 
Moreover, it implies that the approximation feature map $\vphitilde(\xx_n) \in \R^L$. 
After $T$ steps of optimization, the runtime is now $\bigO\left(TL^2 + L^3 + L^2 N \right)$, which is fast enough to be useful in practice.
A natural question to ask is what happens if we apply the \nystrom\ approximation, thereby assuming $\rank\left(\K\right) = L$, but in practice $\rank\left(\K\right) > L$? 
In this case, we are effectively computing a low-rank approximation of the kernel matrix.
Several bounds on the accuracy of this approximation have been proven in the literature \citep{10.5555/1046920.1194916, jin2011improved, nemtsov2016matrix}; we refer the reader to these works for more details on the approximation error.\looseness=-1

\subsection{Pre-image Mapping}
\label{sec:pre-image}
After solving the game in \cref{eq:relaxed-game-nystrom}, we obtain a projection matrix $\PP$ that neutralizes the concept \emph{inside} an (approximated) RKHS. In other words, we have a function $r(\cdot)$ that prevents a classifier from predicting a concept label from the representation $\vphitilde(\xx_n)$.
However, for many applications, we want a version of the input $\xx_n$ \emph{in the original space} with the concept neutralized, i.e., we want $r(\xx_n)$.
Neutralization in the original space requires solving the \textbf{pre-image problem} \citep{mika1998kernel}.
In the case of Nystr{\"o}m features, we seek a mapping from feature space back to the original Euclidean space, i.e., a mapping from $\R^L$ to $\R^D$, that projects the neutralized features back into the input space.\looseness=-1

In practice, this task can also be performed via a mapping from $\R^D \mapsto \R^D$, which learns to reproduce in the input space the transformation that $\PP$ performs in the RKHS. We choose the latter approach, and train a multilayer perceptron (MLP) $\flambda(\cdot) \colon \R^D \rightarrow \R^D$.\footnote{This mapping may not always be a function because there may be more than one point in the RKHS that corresponds to a point in the original space.} To estimate the parameters $\lambda$ of the MLP $f_{\lambda}(\cdot)$, we optimize a two-termed objective over all points $\xx_n$:
\begin{align}\label{eq:pre-imageobj}
    \argmin_{\lambda \in \Lambda} &{\left|\left|\PP \phxnyst - \boldsymbol{\phnyst}(\flambda(\xx_n))\right|\right|_2^2}  \\ 
    & \quad + \left|\left|(\II-\PP) \phnyst(\flambda(\xx_n))\right|\right|_2^2 \nonumber
\end{align}
where $\Lambda$ is the parameter space.
The first term encourages $\flambda(\cdot)$ to perform the same transformation in the input space as $\PP$ does in the RKHS. The second term ensures that $\PP$ has no effect on the RKHS features computed on the neutralized $\flambda(\xx_n)$.\looseness=-1

\section{Experimental Setup}

In \Cref{sec:non-linearerasure}, we established an algorithm that allows us to attempt kernelized concept erasure of non-linearly encoded concepts. To summarize, this method requires first solving the game in \Cref{eq:relaxed-game-nystrom} for a chosen neutralizing kernel, then training a pre-image network according to \cref{eq:pre-imageobj} to obtain neutralized representations in the input space. \looseness=-1

We hypothesize that a non-linearly encoded concept can be exhaustively removed after mapping into the right RKHS. In order for this to hold, the neutralized representations must satisfy two conditions: Adversaries using non-linear classifiers should not be able to predict the erased concept from these representations, and these representations should preserve all other information encoded in them prior to erasure. With binary gender as our non-linearly encoded concept, we conduct several experiments testing both conditions. Before presenting our results, we lay out our experimental setup (see \cref{app:exp-setting} for more details). 

\paragraph{Data.} We run our main experiments on the identification and erasure of binary gender in static GloVe representations
\citep{DBLP:conf/emnlp/PenningtonSM14}. We focus on \citeposs{ravfogel-etal-2020-null} dataset, where word representations are coupled with binary labels indicating whether they are male-biased or female-biased. As a preprocessing step, we normalize the GloVe representations to have unit norm. For an extrinsic evaluation of our method on a main task (profession prediction), we use the Bias-in-Bios dataset of \citet{DBLP:journals/corr/abs-1901-09451}, which consists of a large set of short biographies annotated for both gender and race. Following \citet{rlace}, we embed each biography using the {\small \textsf{[CLS]}} representation of pre-trained BERT.

\paragraph{Kernels.}
We consider the following kernels:
\begin{itemize}
    \item \poly\:: $\kernel(\xx,\yy)={(\gamma \xx^{\top}\yy + \alpha)}^{\degree}$
    \item \rbf\:: $\kernel(\xx,\yy)=\exp\left(-\gamma ||\xx-\yy||_2^2\right)$
    \item \laplace\:: $\kernel(\xx,\yy)=\exp\left(-\gamma ||\xx-\yy||_1\right)$
    \item \linear\:: $\kernel(\xx,\yy)= \xx^{\top}\yy $
    \item \sigmoid\:: $\kernel(\xx,\yy)=\mathrm{tanh}(\gamma \xx^{\top} \yy + \alpha)$
    \item \multiple\:: a convex combination of the above kernels. We consider the following two methods for combining kernels:
    \begin{itemize}
        \item \easymkl\:: a convex combination learned with the EasyMKL algorithm \citep{aiolli2015easymkl} targeted for gender prediction.\footnote{We use the implementation of \href{https://github.com/IvanoLauriola/MKLpy}{MKLpy} with the default parameters.}
        \item \uniformmk\:: a uniform combination of all kernels.
    \end{itemize}
\end{itemize}
\noindent We experiment with different values for the hyperparameters $\gamma > 0$, $\alpha > 0$ and $\degree > 0$ (see \Cref{app:exp-setting} for details). We use $L=1024$-dimensional vectors for the \nystrom\ approximation. 
\paragraph{Reported metrics.} Each result is reported as the mean $\pm$ standard deviation, computed across four runs of the experiment with random restarts.\looseness=-1

\paragraph{Solving the maximin game.} 
We solve the relaxed adversarial game given in \Cref{eq:relaxed-game-nystrom} by alternate gradient-based optimization \citep{goodfellow2016deep}.
Concretely, we alternate between updating the predictor's parameters $\vtheta$ and the projection matrix $\PP$.
Updates to $\vtheta$ are performed with gradient descent, and updates to $\PP$ are performed with gradient \emph{ascent}, including a projection onto the Fantope to ensure that the constraint is met.
For the Fantope projection step, we use \citeposs{vu2013fantope} algorithm, the details of which are restated in \citet{rlace}. 

\paragraph{Pre-image calculation.} 
As our pre-image network $f_{\lambda}(\cdot)$, we use an MLP with two hidden layers of sizes 512 and 300, respectively.
We use layer normalization after each hidden layer and ReLU activations. 
See \cref{app:pipeline-eval} for basic empirical validation of the pre-image calculation procedure. 

\section{Effect on Concept Encoding}

In this section, we pose the \defn{exhaustive RKHS hypothesis}: The hypothesis that binary gender can be \emph{exhaustively} removed when the representations are mapped into the right RKHS. That is, there exists a unique kernel such that, for any choice of non-linear predictor, the adversary cannot recover gender information from the pre-image representations obtained via our special kernel.
As a baseline, we note that gender prediction accuracy on the original representations, prior to any intervention, is above 99\% with \emph{every} kernel, including the linear kernel. 
This means that the gender concept is linearly separable in the original input space.
In this context, we conduct the following experiments on gender neutralization.\looseness=-1 
\paragraph{Same adversary.} We start by calculating the neutralized pre-images for each kernel type, and then apply the same kernel adversary to recover gender information. This experiment tests whether we can protect against the same kernel adversary.\looseness=-1 

\paragraph{Transfer between kernels.} To directly test the exhaustive RKHS hypothesis, we calculate the neutralized pre-image representations with respect to a neutralizing kernel, and then use a different adversarial kernel to recover gender. For instance, we calculate neutralized pre-image representations with respect to a polynomial kernel, and then predict gender using a Laplace kernel, or a polynomial kernel with different hyperparameters.\looseness=-1 

\paragraph{Biased associations.} Gender can manifest itself in the pre-image representations via biased associations, even when gender is neutralized according to our adversarial test. To assess the impact of our intervention on this notion of gender encoding, we run the WEAT test \citep{DBLP:journals/corr/IslamBN16} on the neutralized pre-image representations.\looseness=-1

\begin{table}[]
\centering
\begin{tabular}{ll}
\toprule
     Type &        Accuracy \\
\midrule
     \poly & 0.59 $\pm$ 0.15 \\
      \rbf & 0.69 $\pm$ 0.16 \\
  \laplace & 0.75 $\pm$ 0.11 \\
   \linear & 0.54 $\pm$ 0.02 \\
  \sigmoid & 0.49 $\pm$ 0.00 \\
  \easymkl & 0.69 $\pm$ 0.01 \\
\uniformmk & 0.49 $\pm$ 0.00 \\
\bottomrule
\end{tabular}

\caption{Gender prediction accuracy from the neutralized pre-image representations when using the same kernel for neutralization and recovery. Numbers are averages over hyperparameters of each kernel and over four randomized runs of the experiment.}
\label{tab:pre-image-pred-gender}
\end{table}

\begin{table*}[h]
\centering
\scalebox{0.75}{
%\scalebox{0.75}{
% CR UPDATES
\begin{tabular}{lllllllll}
\toprule
{} &             \poly &              \rbf &          \laplace &           \linear &          \sigmoid &          \easymkl &        \uniformmk &              \mlp \\
\midrule
\poly      &  0.98 $\pm$ 0.00 &  0.96 $\pm$ 0.00 &  0.93 $\pm$ 0.00 &  0.55 $\pm$ 0.01 &  0.49 $\pm$ 0.00 &  0.98 $\pm$ 0.00 &  0.49 $\pm$ 0.00 &  0.97 $\pm$ 0.00 \\
\rbf       &  0.98 $\pm$ 0.00 &  0.96 $\pm$ 0.00 &  0.93 $\pm$ 0.00 &  0.59 $\pm$ 0.02 &  0.49 $\pm$ 0.00 &  0.98 $\pm$ 0.00 &  0.49 $\pm$ 0.00 &  0.97 $\pm$ 0.00 \\
\laplace   &  0.98 $\pm$ 0.00 &  0.96 $\pm$ 0.00 &  0.94 $\pm$ 0.00 &  0.61 $\pm$ 0.01 &  0.49 $\pm$ 0.00 &  0.98 $\pm$ 0.00 &  0.59 $\pm$ 0.02 &  0.97 $\pm$ 0.00 \\
\linear    &  0.98 $\pm$ 0.00 &  0.96 $\pm$ 0.00 &  0.93 $\pm$ 0.00 &  0.54 $\pm$ 0.02 &  0.49 $\pm$ 0.00 &  0.98 $\pm$ 0.00 &  0.49 $\pm$ 0.00 &  0.97 $\pm$ 0.00 \\
\sigmoid   &  0.98 $\pm$ 0.00 &  0.93 $\pm$ 0.01 &  0.89 $\pm$ 0.01 &  0.65 $\pm$ 0.03 &  0.49 $\pm$ 0.00 &  0.97 $\pm$ 0.00 &  0.64 $\pm$ 0.03 &  0.97 $\pm$ 0.00 \\
\easymkl   &  0.98 $\pm$ 0.00 &  0.96 $\pm$ 0.00 &  0.94 $\pm$ 0.00 &  0.57 $\pm$ 0.03 &  0.49 $\pm$ 0.00 &  0.98 $\pm$ 0.00 &  0.49 $\pm$ 0.00 &  0.97 $\pm$ 0.00 \\
\uniformmk &  0.98 $\pm$ 0.00 &  0.96 $\pm$ 0.00 &  0.94 $\pm$ 0.01 &  0.58 $\pm$ 0.08 &  0.49 $\pm$ 0.00 &  0.98 $\pm$ 0.00 &  0.49 $\pm$ 0.00 &  0.97 $\pm$ 0.00 \\
\bottomrule
\end{tabular}

%}
}
\caption{Evaluation of the neutralized pre-image representations when using a different kernel for neutralization and recovery. The neutralizing kernel is presented in the rows, and the kernel adversary used for recovery in the columns. Note that on the diagonal, the \emph{hyperparameters} of the same kernel family differ between the neutralizing kernel and the adversary that recovers gender form the pre-image representations. See \cref{app:trasnfer} for a breakdown by neutralizing kernel hyperparameters and for details on the hyperparameters of the kernel adversaries.}
\label{tab:pre-image-pred-avg}
\end{table*}

\subsection{Pre-image Recovery: Same Adversary} 

In \Cref{tab:pre-image-pred-gender}, we report average gender prediction accuracy on the neutralized pre-images for the case where the neutralizing and adversarial kernels are of the same type and share the same hyperparameters.\footnote{We use the \nystrom\ approximation for neutralization, but predict gender from the pre-image representations using SVM classifiers that use the true kernel.} Numbers are averages over the results of all hyperparameter values of each kernel.
See \cref{app:gender-results} for the full results. As can be seen, for most---but not all---kernels, we effectively hinder the ability of the non-linear classifier to predict gender.\looseness=-1

\subsection{Pre-image Recovery: Kernel Transfer} 
\label{sec:transfer}

In \Cref{tab:pre-image-pred-avg}, we report average gender prediction accuracy on the neutralized pre-images for the case where the neutralizing and adversarial kernels are different. Numbers are averages over several different hyperparameter settings for the neutralizing kernel, while the adversarial kernel hyperparameters are fixed as detailed in \Cref{app:trasnfer}.
Also, see the appendix for a full breakdown by neutralizing kernel hyperparameters.
\Cref{tab:pre-image-pred-avg} allows us to test the exhaustive RKHS hypothesis. Under this hypothesis, we would expect to see that for at least one neutralizing kernel, no non-linear classifers are able to accurately recover gender. For a thorough test of the hypothesis, we introduce an MLP with a single hidden layer as an additional adversary.\looseness=-1 

Remarkably, we observe a complete lack of generalization of our concept erasure intervention to other types of non-linear predictors. In particular, no neutralizing kernel significantly hinders the ability of an MLP with a single hidden layer to predict gender: the MLP always recovers the gender labels with an accuracy of 97\%. Furthermore, concept erasure does not transfer between different kernel types, and even between kernels of the same family with different hyperparameter settings. For instance, when using a polynomial neutralizing kernel, we protect against a polynomial adversarial kernel with the same parameters (mean accuracy of 59\% in \Cref{tab:pre-image-pred-gender}), but not against a polynomial adversarial kernel with different hyperparameters (98\% mean accuracy for \poly\ in \Cref{tab:pre-image-pred-avg}).

We do see transfer to sigmoid, and---to a lesser degree---linear kernel adversaries, with a classification accuracy of 54--65\% for the linear adversary, and 49\% for the sigmoid adversary. Surprisingly, the sigmoid kernel seems weaker than the linear kernel, achieving a near-random accuracy of 49\%. The results do not show evidence of a proper hierarchy in the expressiveness of the different kernels, and convex combinations of the different kernels---either learned (\easymkl) or uniform (\uniformmk)---do not provide better protection than individual kernels.\looseness=-1

In short, while we are able to effectively protect against the same kernel, transfer to different kernels is non-existent. This result does not support the exhaustive RKHS hypothesis: We do not find a single RKHS that exhaustively encodes the binary gender concept.

\begin{table}[]
\centering
%\scalebox{0.75}{
% CR UPDATES
\begin{tabular}{lllllll}
\toprule
  Kernel &           \texttt{WEAT's} $d$ &                  \texttt{WEAT's} $p$-value \\
\midrule
%     \poly &   0.736 $\pm$ 0.006 &  0.081 $\pm$ 0.002 &  0.59 $\pm$ 0.16 \\
%      \rbf &   0.743 $\pm$ 0.005 &  0.078 $\pm$ 0.000 &  0.69 $\pm$ 0.16 \\
%  \laplace &  0.710 $\pm$ 0.030 &  0.091 $\pm$ 0.011 &  0.76 $\pm$ 0.11 \\
%   \linear &  0.740 $\pm$ 0.010 &  0.079 $\pm$ 0.004 &  0.54 $\pm$ 0.02 \\
%  \sigmoid & 0.750 $\pm$ 0.021 &  0.076 $\pm$ 0.006 &  0.49 $\pm$ 0.00 \\
     \poly & 0.74 $\pm$ 0.01 & 0.08 $\pm$ 0.00  \\
      \rbf & 0.74 $\pm$ 0.00 & 0.08 $\pm$ 0.00  \\
  \laplace & 0.71 $\pm$ 0.03 & 0.09 $\pm$ 0.01 \\
   \linear & 0.74 $\pm$ 0.00 & 0.08 $\pm$ 0.00 \\
  \sigmoid & 0.75 $\pm$ 0.02 & 0.08 $\pm$ 0.01 \\
  EasyMKL & 0.73 $\pm$ 0.00 & 0.08 $\pm$ 0.00  \\
UniformMK & 0.73 $\pm$ 0.00 & 0.08 $\pm$ 0.00 \\
    \hline
 Original & 1.56 & 0.000 \\
\bottomrule
\end{tabular}
%}
\caption{WEAT results on pre-image representations. Numbers are averages over hyperparameters of each kernel.
}
\label{tab:pre-image-pred}
\end{table}

\subsection{Effect on Gendered Word Associations}

In the case where the concept of gender cannot be recovered by an adversary, binary gender could still manifest itself in more subtle ways. For instance, it may be the case that the names of gender-biased professions, such as STEM fields, are closer in representation space to male-associated words than to female-associated words. We aim to measure the extent to which our neutralized pre-image representations exhibit this measure of gender bias.

\paragraph{Evaluation.} 
To quantify bias associations, \citet{DBLP:journals/corr/IslamBN16} propose the WEAT word association test. This test measures WEAT's $\degree$, a statistic that quantifies the difference in similarity between two sets of gendered words (e.g., male first names and female first names) and two sets of potentially biased words (e.g., stereotypically male and stereotypically female professions).\footnote{It asks, for example, whether the degree to which male first names are closer to scientific terms versus artistic terms is significantly different from the same quantity calculated for female first names.} We repeat the experiments of \citet{gonen-goldberg-2019-lipstick} and \citet{ravfogel-etal-2020-null}.  Following \citet{gonen-goldberg-2019-lipstick}, we represent the male and female groups with names commonly associated with males and females, rather than with explicitly gendered words (e.g., pronouns). Three tests evaluate the association between name groups and i) career and family-related words; ii) art and mathematics-related words; and iii) names of artistic and scientific fields. Successful neutralization of gender would imply that these word groups are less closely associated in the pre-image representations.

\paragraph{Results.} In \Cref{tab:pre-image-pred}, we report the test statistic and the $p$-value for the third test using the names of scientific and artistic fields to represent gender-biased words.\footnote{See \Cref{app:weat} for results of the other two biased word association tests.} For all kernels, we observe a significant drop in the test statistic, from the original value of 1.56 to around 0.74. This suggests that the intervention significantly decreases the association between female and male names and stereotypically biased words.
 Notably, the reduction is similar for \emph{all} kernels, including the linear one. While non-linear erasure is more effective in neutralizing gender against adversarial recovery, linear and non-linear methods perform equally well according to this bias association test. This finding highlights the importance of measuring different manifestations of a concept when using concept neutralization as a bias mitigation method.\looseness=-1

\section{Negative Impact on the Representations}

Our method has shown a satisfactory ability to prevent the same kernel from recovering the concept. However, does erasure remove \emph{too much} information?\footnote{It is trivial to neutralize a concept by completely zeroing out the representations.} As previously stated, our intervention should erase a concept without altering any of the other information encoded in the original representations. In this section, we evaluate whether the non-gender related semantic content of the original representations is preserved in our neutralized pre-images. We do so via the following tests: i) an \textbf{intrinsic evaluation} of the semantic content of the neutralized pre-image word representation space, and
ii) an \textbf{extrinsic evaluation} of our method when applied to contextualized word representations for a profession prediction task, measuring the extent to which we hinder a model's ability to perform the main task.\looseness=-1

\subsection{Intrinsic Evaluation of Damage to Semantic Content}

To measure the influence of our method on the semantics encoded in the representation space, we use SimLex-999 \citep{DBLP:journals/coling/HillRK15}, an annotated dataset of word pairs with human similarity scores for each pair. First, we calculate the cosine similarity between the representations of each pair of words using the original representations. Then, we repeat this calculation for each type of kernel using the pre-image representations. Finally, we measure the correlation between the similarity of words in representation space and human similarity scores, before and after intervention. The original correlation is $0.400$, and it is left nearly unchanged by \emph{any} of the kernel interventions, yielding values between $0.387$ and $0.396$. To qualitatively demonstrate the absence of negative impact, we show in \cref{app:neighbors} that the nearest neighbors of randomly sampled words do not change significantly after gender erasure.\looseness=-1

\subsection{Extrinsic Evaluation on Contextualized Representations} 

The previous experiments focused on the influence of concept neutralization on uncontextualized representations. 
Here, we apply our concept neutralization method on contextualized BERT representations and assess its effect on profession prediction. We embed each biography in the dataset of \citet{DBLP:journals/corr/abs-1901-09451} using the {\small \textsf{[CLS]}} representation of pre-trained BERT, and apply our method using only the RBF kernel. After collecting the pre-image representations, we train a linear classifier on the main task of profession prediction.

\paragraph{Results.} Averaged over different hyperparameter settings of the RBF kernel, we achieve a profession prediction accuracy after neutralization of $74.19 \pm 0.056 \%$. 
For reference, prediction accuracy using the original BERT representations is $76.93\%$. This suggests that the pre-images still encode most of the profession information, which is largely orthogonal to the neutralized gender information. 

\section{Discussion}

We have demonstrated that in the case where the neutralizing kernel and the adversarial kernel are the same, we are able to neutralize a non-linearly encoded concept reasonably well. We have also shown that our method neutralizes gender in a comprehensive manner, without damaging the representation. However, this neutralization does not transfer to different non-linear adversaries, which are still able to recover gender.

While the lack of transfer to other non-linear predictors may seem surprising, one should keep in mind that changing the kernel type, or changing kernel hyperparameters, results in a different implicit feature mapping. Even the features defined by a linear kernel are not a proper subset of the features defined by a polynomial kernel of degree 2.\footnote{Features defined by a linear kernel can be a proper subset of the features defined by a polynomial kernel if there is a fixed coordinate in all examples prior to the kernel mapping.}
As such, removing the features which make the concept of interest linearly separable in one RKHS does not necessarily prevent a classifier parameterized by another kernel or an MLP from predicting the concept. 
In the context of gender erasure, these results suggest that protection against a diverse set of non-linear adversaries remains an open problem.

\section{Conclusion}
We propose a novel method for the identification and erasure of non-linearly encoded concepts in neural representations.
We first map the representations to an RKHS, before identifying and neutralizing the concept in that space.
We use our method to empirically assess the exhaustive RKHS hypothesis: We hypothesize that there exists a unique kernel that exhaustively identifies the concept of interest. We find that while we are able to protect against a kernel adversary of the same type, this protection does not transfer to different non-linear classifiers, thereby contradicting to the RKHS hypothesis. Exhaustive concept erasure and protection against a diverse set of non-linear adversaries remains an open problem.  

\section*{Limitations}
The empirical experiments in this work involve the removal of binary gender information from pre-trained representations. 
We note the fact that gender is a non-binary concept as a major limitation of our work. 
This task may have real-world applications, in particular relating to fairness. We would encourage readers to be careful when attempting to deploy methods such as the one discussed in this paper. Regardless of any proofs, one should carefully measure the effectiveness of the approach in the context in which it is to be deployed. Please consider, among other things, the exact data to be used, the fairness metrics under consideration, and the overall application.  

We urge practitioners not to regard this method as a solution to the problem of bias in neural models, but rather as a preliminary research effort toward mitigating certain aspects of the problem. 
Unavoidably, the datasets we use do not reflect all the subtle and implicit ways in which gender bias is manifested. As such, it is likely that different forms of bias still exist in the representations following the application of our method.

\section*{Ethical Concerns}
We do not foresee ethical concerns with this work.\looseness=-1

\section*{Acknowledgements}
The authors sincerely thank Clément Guerner for his thoughtful and comprehensive comments and revisions to the final version of this work. 
This project received funding from the European Research Council (ERC) under the European Union's Horizon 2020 research and innovation program, grant agreement No. 802774 (iEXTRACT). 
Ryan Cotterell acknowledges 
Google for support from the Research Scholar Program. %Shauli Ravfogel is grateful to be supported by the Bloomberg Data Science Ph.D Fellowship.

\bibliography{anthology,main}
\bibliographystyle{acl_natbib}
%%%%%%%%%%%%%%%%%%%%%%%%%%%%%%%%%%%%%%%%%%%%%%%%%%%%%%%%%%%%%%%%%%%%%%%%%%%%%%%
%%%%%%%%%%%%%%%%%%%%%%%%%%%%%%%%%%%%%%%%%%%%%%%%%%%%%%%%%%%%%%%%%%%%%%%%%%%%%%%
% APPENDIX
%%%%%%%%%%%%%%%%%%%%%%%%%%%%%%%%%%%%%%%%%%%%%%%%%%%%%%%%%%%%%%%%%%%%%%%%%%%%%%%
%%%%%%%%%%%%%%%%%%%%%%%%%%%%%%%%%%%%%%%%%%%%%%%%%%%%%%%%%%%%%%%%%%%%%%%%%%%%%%%
\onecolumn 
\appendix

\section{Related Work}
\paragraph{Mitigation of Gender Bias.}
The identification of linear subspaces that encode binary gender has attracted considerable research interest \citep{bolukbasi2016man, gonen-goldberg-2019-lipstick, dev2019attenuating, ravfogel-etal-2020-null}. While bias mitigation is a central use case of concept erasure, concept subspaces have been applied to a number of tasks. Concept subspaces have been used to analyze the content of neural representations, e.g., for causal analysis \citep{elazar2021amnesic,ravfogel-etal-2021-counterfactual}, for analyzing the geometry of the representation space \citep{celikkanat2020controlling,gonen2020s, hernandez-andreas-2021-low}, and for concept-based interpretability \citep{kim2018interpretability}.

\paragraph{Kernelization of Linear Methods.}
The kernelization of linear machine learning algorithms is a common practice, and has many use cases, such as the kernelized perceptron \citep{aizerman1964theoretical} and kernel PCA \citep{scholkopf1997kernel}. \citet{white2021non} proposed a kernelization of a structural probe that extracts syntactic structure from neural representations. \citet{vargas-2020} proposed a kernelization of the PCA-based bias mitigation method of \citet{bolukbasi2016man}, and found that it does not improve on the linear mitigation procedure. Since the effectiveness of this method has been questioned \citep{gonen-goldberg-2019-lipstick}, we consider a more principled and well-motivated approach for the identification and neutralization of the concept subspace. \citet{sadeghi2019global} proposed a kernelization of an alternative, regression-based linear adversarial objective, which is not limited to \emph{orthogonal projections}. Our formulation is different in that it considers any linear model, and is restricted to the neutralization of linear subspaces via projection. This makes our method potentially less expressive, but more interpretable. 

\subsection{A Representer Lemma for Kernelized Maximin Games} \label{appdx:repr}

\representer*

\begin{proof}
For brevity, first notice we can re-express the objective as
\begin{align}\label{eq:game}
   &\max_{\vw, \in \hilbert} \min_{
   \vtheta \in \hilbert} \sum_{n=1}^N \loss\left(\yn,\, \Big\langle \vtheta, \left(\identity - \frac{{\vw\vw^\top}}{\vw^{\top} \vw} \right) \phx \Big\rangle \right).
\end{align}
We will show that both $\vw$ and $\vtheta$ can be expressed as a linear combination of terms from the training data without losing expressive power.
Now, decompose $\vw$ as follows: $\vw = \vw_{U} + \vw_{\perp U}$, where we represent $\vw$ as the sum of $\vw$ projected onto $U$ and onto its orthogonal complement $U_{\perp}$.
Now, note that for any element of the training data $\vphi(\vx_n)$, we have
\newcommand{\vwU}{\vw_U}
\newcommand{\vwUperp}{\vw_{\perp U}}

\begin{subequations}
\begin{align}
\left(\identity - \frac{\vw\vw^{\top}}{\vw^{\top}{\vw}}\right) \vphi(\vx_n) &= \left(\identity - \frac{(\vwU + \vwUperp) (\vwU + \vwUperp)^{\top}}{\vw^{\top}{\vw}}\right) \vphi(\vx_n) \\
&= \left(\identity - \frac{\vwU\vwU^{\top}}{\vw^{\top}{\vw}} - \underbrace{\frac{\vwU\vwUperp^{\top}}{\vw^{\top}{\vw}}}_{=0} - \underbrace{\frac{\vwUperp\vwU^{\top}}{\vw^{\top}{\vw}}}_{=0} - \frac{\vwUperp\vwUperp^{\top}}{\vw^{\top}{\vw}} \right)\vphi(\vx_n) \\
&= \left(\identity - \frac{\vw_U\vw_U^{\top}}{\vw^{\top}{\vw}} - \frac{\vw_{\perp U}\vw_{\perp U}^{\top}}{\vw^{\top}{\vw}}\right) \vphi(\vx_n)  \\
&= \vphi(\vx_n) - \frac{\vw_U\vw_U^{\top}}{\vw^{\top}{\vw}} \vphi(\vx_n)- \underbrace{\frac{\vw_{\perp U}\vw_{\perp U}^{\top}}{\vw^{\top}{\vw}}}_{=0} \label{eq:cancel-step2} \vphi(\vx_n)   \\
&= \vphi(\vx_n) - \vw_U\frac{\vw_U^{\top}\vphi(\vx_n)}{\vw^{\top}{\vw}}.
\end{align}
\end{subequations}
However, we have that $\vphi(\vx_n) - \vw_U\frac{\vw_U^{\top}\vphi(\vx_n)}{\vw^{\top}{\vw}}$ is in $U$.
Likewise, we can decompose $\vtheta = \vtheta_U + \vtheta_{\perp U}$.
Further manipulation reveals
\begin{equation}
\Big\langle \vtheta, \left(\identity - {\vw\vw^\top} \right) \phx \Big\rangle = \vtheta_{U}^{\top}\vphi(\vx_n) - \vtheta_{U}^{\top}\vw_U\frac{\vw_U^{\top}\vphi(\vx_n)}{\vw^{\top}{\vw}}. 
\end{equation}\ryan{We are missing a denominator in 14.}
Thus, for any $\vtheta, \vw,   \in \hilbert$ there exists a $\vtheta_{U}, \vw_U \in U$ that yields the same value of the objective as $\vtheta, \vw$.   
Now, we can parameterize $\vtheta_U$ and $\vw_U$ as 
\begin{align}
\vw_U &= \sum_{n=1}^N \alpha_n \vphi(\vx_n) \\
\vtheta_U &= \sum_{n=1}^N \beta_n \vphi(\vx_n).
\end{align}
for real coefficients $\valpha \in \R^N$ and $\vbeta \in \R^N$.
We conclude that for any local optimum $\vtheta^\star, \vw^\star \in \hilbert$, the projection of $\vtheta^\star$ and $\vw^\star$ onto $U$ yields a local optimum with the same value.
\end{proof}

\subsection{Kernelization of the Maximin Game}
\label{app:kernelization}

We show that the game \cref{eq:main-game} can be kernelized for the case $k=1$, i.e., a setting where the matrix $\projk$ removes a one-dimensional subspace.
Specifically, we will show that the product $\langle \vtheta, \Pmat \phxn \rangle$ in \cref{eq:main-game} can be expressed as a function of the kernel $\kernel(\cdot, \cdot)$.

\kernelization*

\begin{proof}

The projection onto the orthogonal complement of $\vw = \sum_{n=1}^N \alpha_n \phx$ is defined as the following
\begin{equation}
\projcomp \defeq \identity - \frac{\Big(\sum_{n=1}^N{\alpha_n \phxi}\Big)\Big(\sum_{n=1}^N{\alpha_n \vphi(\xxn)^{\top}}\Big)} {\Big(\sum_{n=1}^N\alpha_n  \vphi(\xx_n)^{\top}\Big)\Big(\sum_{n=1}^N \alpha_n \vphi(\xx_n) \Big)}
\end{equation}
where $\identity$ is the identity operator.
Algebraic manipulation reveals
\allowdisplaybreaks
\begin{subequations}
\begin{align}
     \projcomp\phz &\defeq \identity \phz - \frac{\Big(\sumi{\alpha_n \phx}\Big)\Big(\sum_{n=1}^N{\alpha_{n} \phxnprime^{\top}}\Big)} {\Big(\sum_{n=1}^N \alpha_n \vphi(\xx_n)^{\top}\Big)\Big(\sum_{n=1}^N\alpha_{n} \vphi(\xx_n) \Big)}\, \phz  \\
    &= \phz - \frac{\sum_{n=1}^N \sum_{m=1}^N {\alpha_n \alpha_{m} \vphi(\xx_n) \vphi(\xx_m)^{\top}} \phz } {\sum_{n=1}^N \sum_{m=1}^N {\alpha_n \alpha_{m} \vphi(\xx_n)^{\top} \vphi(\xx_m)} } \\ 
    &= \phz - \frac{\sum_{n=1}^N \sum_{m=1}^N {\alpha_n \alpha_{m} \vphi(\xx_n) \kernel(\xx_m, \zz)} } {\sum_{n=1}^N \sum_{m=1}^N {\alpha_n \alpha_{m} \kernel(\xxn, \xxm)} } \\
    &= \phz - \underbrace{\left(\frac{\sum_{m=1}^N {\alpha_{m} \kernel(\xxm, \zz)}} {\valpha^{\top} K \valpha}\right)}_{\in \R} \sumi \alpha_n \phx \\
       &= \phz - \left(\frac{\sum_{m=1}^N {\alpha_{m} \kernel(\xxn, \zz)}} {\valpha^{\top} K \valpha}\right) \vw
    \label{projected}  \\
    &\defeq \phzproj
\end{align}
\end{subequations}
Now, consider an element of the reproducing kernel Hilbert space
\begin{equation}
    \vtheta = \sum_{n=1}^N \beta_n \vphi(\xxn) \label{eq:assum2}
\end{equation}
Further algebraic manipulation reveals
\allowdisplaybreaks
\begin{subequations}
\begin{align}
    \Big\langle \vtheta,&\phzproj \Big\rangle =  \Big\langle \vtheta, \phz - \left(\frac{\sumi {\alpha_n \kernel(\xxn, \zz)}} {\valpha^{\top} K \valpha}\right) \vw \Big\rangle \\ 
    &=  \Big\langle \sum_{m=1}^N \beta_{m} \vphi(\xxm), \phz - \left(\frac{\sum_{n=1}^N {\alpha_n \kernel(\xxn, \zz)}} {\valpha^{\top} K \valpha}\right) \vw \Big\rangle \\
   &= \sum_{m=1}^N\beta_{m} \kernel(\xxm, \zz) - \sum_{m=1}^N \beta_{m} \left(\frac{\sum_{n=1}^N {\alpha_n \kernel(\xxn, \zz)}} {\valpha^{\top} K \valpha}\right) \kernel(\xxm, \vw) \\
   &= \sum_{m=1}^N \beta_{m} \kernel(\xxm, \zz) - \frac{1}{\valpha^{\top} K \valpha} \sum_{n=1}^N \sum_{m=1}^N \alpha_n \beta_{m} \kernel(\xxn, \zz) \kernel(\xx_m, \vw) \\
      &= \sum_{m=1}^N \beta_{m} \kernel(\xxm, \zz) - \sum_{m=1}^N 
      \beta_m \frac{\left(\sum_{n=1}^N \alpha_n  \kernel(\xxn, \zz) \kernel(\xx_m, \vw)\right)}{\valpha^{\top} K \valpha} \\
          &= \sum_{m=1}^N \beta_{m} \kernel(\xxm, \zz) - \sum_{m=1}^N 
      \beta_m \frac{\left(\sum_{n=1}^N \alpha_n  \kernel(\xxn, \zz) \vphi(\xx_m)^{\top}\left(\sum_{n'=1}^N \alpha_{n'} \vphi(\xx_{n'})\right)\right)}{\valpha^{\top} K \valpha} \\
              &= \sum_{m=1}^N \beta_{m} \kernel(\xxm, \zz) - \sum_{m=1}^N 
      \beta_m \frac{\left(\sum_{n=1}^N \sum_{n'=1}^N \alpha_{n'} \alpha_n  \kernel(\xxn, \zz) \vphi(\xx_m)^{\top}\vphi(\xx_{n'})\right)}{\valpha^{\top} K \valpha} \\
                    &= \sum_{m=1}^N \beta_{m} \kernel(\xxm, \zz) - \sum_{m=1}^N 
      \beta_m \frac{\left(\sum_{n=1}^N \sum_{n'=1}^N \alpha_n  \alpha_{n'}  \kernel(\xxn, \zz) \kernel(\xx_m, \xx_{n'})\right)}{\valpha^{\top} K \valpha} \\
  &= \sum_{m=1}^N \beta_{m} \left(\kernel(\xxm, \zz) - \frac{\valpha^{\top} \Km(\zz)  \valpha}{\valpha^{\top} \K \valpha} \right)
   \label{eq:prediction-final}
\end{align}
\end{subequations}
where we define the following matrix component-wise: $\boldsymbol{K}^{(m)}_{ij}(\zz) \defeq \kernel(\xxi, \zz) \kernel(\xxm, \xxj)$.
\Cref{eq:prediction-final} can be evaluated without explicitly applying the kernel transformation $\vphi$.
In terms of notation, when we have $\Big\langle \vtheta, \phiproj(\zz_n) \Big\rangle$, we write 
\begin{equation}
\sum_{m=1}^N \beta_{m} \left(\kernel(\xxm, \zz_n) - \frac{\valpha^{\top} \Kmn  \valpha}{\valpha^{\top} K \valpha} \right)
\end{equation}
where we define $\boldsymbol{K}^{(m, n)}_{ij}(\zz) \defeq \kernel(\xxi, \zz_n) \kernel(\xxm, \xxj)$.\ryan{should $\zz_n$ be $\zz$?}
This proves the result.
\end{proof}

\subsection{Experimental setting}
\label{app:exp-setting}

\paragraph{Data.} We conduct experiments on the uncased version of the GloVe representations, which are 300-dimensional, licensed under Apache License, Version 2.0. Following \citet{ravfogel-etal-2020-null}, to approximate the gender labels for the vocabulary, we project all representations on the $\overrightarrow{\text{he}}-\overrightarrow{\text{she}}$ direction, and take the $7,500$ most male-biased and female-biased words. Note that unlike \citep{bolukbasi2016man}, we use the $\overrightarrow{\text{he}}-\overrightarrow{\text{she}}$\ direction only to induce approximate gender labels, but then proceed to measure the bias in various ways that go beyond neutralizing just the $\overrightarrow{\text{he}}-\overrightarrow{\text{she}}$ direction.
We use the same train--dev--test split of \citet{ravfogel-etal-2020-null}, but discard the gender-neutral words (i.e., we cast the problem as a binary classification).  We obtain training, evaluation, and test sets of sizes 7,350, 3,150 and 4,500, respectively. We perform four independent runs of the entire method for all kernel types, with different random seeds.

\paragraph{The kernelized maximin game.} For each kernel, we experiment with the following combinations of hyperparameter values:\looseness=-1
\begin{itemize}
    \item \poly\:: $\degree\ \in \{2,3\}$; $\gamma \in \{0.05, 0.1, 0.15\}$; $\alpha \in \{0.8,1,1.2\}$.
    \item \rbf\:: $\gamma \in \{0.1, 0.15, 0.2\}$
    \item \laplace\:: $\gamma \in \{0.1, 0.15, 0.2\}$
    \item \sigmoid\:: $\alpha \in \{0, 0.01\}$; $\gamma \in \{0.005, 0.003\}$.
\end{itemize}
We approximate the kernel space using $L=1024$ \nystrom\ landmarks. We run the adversarial game \cref{eq:relaxed-game-nystrom} for each of the kernel mappings we consider, by performing alternate minimization and maximization over $\vtheta$ and $\PP$, respectively.
As our optimization procedure, we use stochastic gradient descent with a learning rate of 0.08 and minibatches of size 256. We run for 35,000 batches, and choose the projection matrix $\PP$ which leads to the biggest \emph{decrease} in the linear classification accuracy on the evaluation set. In all cases, we identify a matrix which decreases classification accuracy to near-random accuracy. All training is done on a single NVIDIA GeForce GTX 1080 Ti GPU.

\paragraph{Pre-image mapping.} We train an MLP with 2 hidden layers of sizes 512 and 300 to map the original inputs $\xx_n$ to inputs which, after being mapped to kernel space, are close to the neutralized features. We use dropout of 0.1, ReLU activation and layer normalization after each hidden layer. We use a skip connection between the input and the output layer, i.e., we set the final output of the MLP to be the sum of its inputs and outputs.\footnote{This is useful because we assume the gender information is not very salient in the original representations, since these encode much more information. With the skip connection, the MLP only needs to learn what to remove, rather than learning to generate plausible representations.} We train for 15,000 batches of size 128 and choose the model that yields the lowest loss on the evaluation set.  

\paragraph{Non-linear gender prediction.} We consider the following non-linear predictors: SVMs with different kernels, as well as an MLP with 128 hidden units and ReLU activations. We use the sklearn implementation of predictors. They are trained on the reconstructed pre-image of the training set, and tested on the reconstructed pre-image of the test set. Note that while in training we used an approximation of the kernel function, we predict gender from the pre-images using SVM classifiers that rely on the actual, exact kernel.\looseness=-1 

\subsubsection{Pipeline Evaluation}
\label{app:pipeline-eval}
In this appendix, we include sanity check experiments that aim to assess whether the maximin game \cref{eq:relaxed-game-nystrom} effectively removes \emph{linearly}-present concepts from the \emph{non-linear} kernel features, and whether the training of the pre-image network succeeds. 

\paragraph{Concept erasure in kernel space.}
Do we effectively neutralize the concept in the approximate kernel space? For each kernel, we solve the game \cref{eq:relaxed-game-nystrom} and use the final projection matrix $\PP$ to create neutralized features.
We neutralize the features in RKHS by mapping $\phxnyst \mapsto \PP \phxnyst $, and train a \emph{linear} classifier to recover the gender labels from the neutralized representations. We get a classification accuracy of 50.59$\pm$0.04, very close to majority accuracy of 50.58. This suggests that the process is effective in protecting against the kernel which was applied in training (training a linear classifier on the kernel-transformed representations is equivalent to training a kernel classifier on the original representations). 
Notice, however, that we cannot test \emph{other} non-linear kernel classifiers on these representations in a similar way: If the approximate kernel mapping $\widetilde{\vphi}(\cdot)$ corresponds, for example, to a polynomial kernel, we cannot measure the success of an RBF kernel in recovering the bias information after the intervention without performing the pre-image mapping.

\paragraph{Pre-image mapping.} Our neutralization algorithm relies on calculating the pre-image of the kernel features after the intervention. To evaluate the quality of the pre-image mapping, we measure the relative reconstruction error $\left|\left|\frac{{\PP \phxnyst - \phnyst(f(\xx_n))}}{\PP \phxnyst}\right|\right|^2_2$ over all points $\xx_n$ on the evaluation set. When averaged over all 4 seeds and the different kernels we experimented with, we get a reconstruction error of 1.81$\pm$1.61\% (range 0.45-7.94).

\clearpage

\subsection{Gender Prediction from the Pre-image}
\label{app:gender-results}
In \Cref{tab:pre-image-pred-full} we report the full evaluation results on the pre-image  neutralized representations, where we have used the same kernel (and the same hyperparameters) for neutralization and gender recovery from the pre-image.

\begin{table}[h]
\centering
% CR UPDATED
\begin{tabular}{lllllll}
\toprule
  Kernel &  $\gamma$ & $\alpha$ &   \degree &                \texttt{WEAT's} d &                  \texttt{WEAT's} p-value &         Gender Acc. \\
\midrule
     \poly &  0.05 &   0.8 &    2 & 0.73 $\pm$ 0.01 & 0.084 $\pm$ 0.002 & 0.49 $\pm$ 0.00 \\
     \poly &  0.05 &     1 &    2 & 0.74 $\pm$ 0.01 & 0.080 $\pm$ 0.002 & 0.49 $\pm$ 0.00 \\
     \poly &  0.05 &   1.2 &    2 & 0.74 $\pm$ 0.01 & 0.081 $\pm$ 0.002 & 0.49 $\pm$ 0.00 \\
     \poly &  0.05 &   0.8 &    3 & 0.75 $\pm$ 0.01 & 0.077 $\pm$ 0.003 & 0.49 $\pm$ 0.00 \\
     \poly &  0.05 &     1 &    3 & 0.74 $\pm$ 0.01 & 0.080 $\pm$ 0.003 & 0.49 $\pm$ 0.00 \\
     \poly &  0.05 &   1.2 &    3 & 0.74 $\pm$ 0.01 & 0.081 $\pm$ 0.003 & 0.49 $\pm$ 0.00 \\
     \poly &   0.1 &   0.8 &    2 & 0.74 $\pm$ 0.01 & 0.080 $\pm$ 0.003 & 0.49 $\pm$ 0.00 \\
     \poly &   0.1 &     1 &    2 & 0.74 $\pm$ 0.00 & 0.080 $\pm$ 0.001 & 0.49 $\pm$ 0.00 \\
     \poly &   0.1 &   1.2 &    2 & 0.74 $\pm$ 0.01 & 0.081 $\pm$ 0.002 & 0.49 $\pm$ 0.00 \\
     \poly &   0.1 &   0.8 &    3 & 0.74 $\pm$ 0.00 & 0.081 $\pm$ 0.002 & 0.53 $\pm$ 0.01 \\
     \poly &   0.1 &     1 &    3 & 0.73 $\pm$ 0.01 & 0.082 $\pm$ 0.004 & 0.65 $\pm$ 0.03 \\
     \poly &   0.1 &   1.2 &    3 & 0.73 $\pm$ 0.01 & 0.084 $\pm$ 0.004 & 0.72 $\pm$ 0.01 \\
     \poly &  0.15 &   0.8 &    2 & 0.73 $\pm$ 0.01 & 0.082 $\pm$ 0.003 & 0.56 $\pm$ 0.03 \\
     \poly &  0.15 &     1 &    2 & 0.74 $\pm$ 0.02 & 0.081 $\pm$ 0.006 & 0.54 $\pm$ 0.02 \\
     \poly &  0.15 &   1.2 &    2 & 0.73 $\pm$ 0.01 & 0.084 $\pm$ 0.004 & 0.55 $\pm$ 0.02 \\
     \poly &  0.15 &   0.8 &    3 & 0.73 $\pm$ 0.01 & 0.082 $\pm$ 0.003 & 0.88 $\pm$ 0.02 \\
     \poly &  0.15 &     1 &    3 & 0.73 $\pm$ 0.01 & 0.082 $\pm$ 0.003 & 0.92 $\pm$ 0.00 \\
     \poly &  0.15 &   1.2 &    3 & 0.74 $\pm$ 0.01 & 0.079 $\pm$ 0.003 & 0.93 $\pm$ 0.00 \\
      \rbf &   0.1 &  - & - & 0.75 $\pm$ 0.01 & 0.078 $\pm$ 0.003 & 0.49 $\pm$ 0.01 \\
      \rbf &  0.15 &  - & - & 0.74 $\pm$ 0.01 & 0.079 $\pm$ 0.003 & 0.68 $\pm$ 0.03 \\
      \rbf &   0.2 &  - & - & 0.74 $\pm$ 0.01 & 0.081 $\pm$ 0.003 & 0.89 $\pm$ 0.01 \\
  \laplace &   0.1 &  - & - & 0.72 $\pm$ 0.03 & 0.086 $\pm$ 0.008 & 0.62 $\pm$ 0.04 \\
  \laplace &  0.15 &  - & - & 0.74 $\pm$ 0.05 & 0.080 $\pm$ 0.015 & 0.77 $\pm$ 0.05 \\
  \laplace &   0.2 &  - & - & 0.67 $\pm$ 0.05 & 0.107 $\pm$ 0.020 & 0.88 $\pm$ 0.04 \\
   \linear &  - &  - & - & 0.74 $\pm$ 0.01 & 0.079 $\pm$ 0.004 & 0.54 $\pm$ 0.02 \\
  \sigmoid & 0.005 &     0 & - & 0.78 $\pm$ 0.05 & 0.069 $\pm$ 0.014 & 0.49 $\pm$ 0.00 \\
  \sigmoid & 0.005 &  0.01 & - & 0.73 $\pm$ 0.03 & 0.082 $\pm$ 0.010 & 0.49 $\pm$ 0.00 \\
  \sigmoid & 0.003 &     0 & - & 0.73 $\pm$ 0.05 & 0.083 $\pm$ 0.017 & 0.49 $\pm$ 0.00 \\
  \sigmoid & 0.003 &  0.01 & - & 0.76 $\pm$ 0.03 & 0.074 $\pm$ 0.008 & 0.49 $\pm$ 0.00 \\
  \easymkl &  - &  - & - & 0.73 $\pm$ 0.01 & 0.084 $\pm$ 0.005 & 0.69 $\pm$ 0.01 \\
\uniformmk &  - &  - & - & 0.73 $\pm$ 0.01 & 0.084 $\pm$ 0.002 & 0.49 $\pm$ 0.00 \\
    \hline
 Original & - & - & - & 1.56 & 0.000 & $\geq 0.99$ \\
\bottomrule
\end{tabular}
    
\caption{Evaluation of the neutralized pre-image representations. We show the WEAT test's statistics and $p$-value, as well as the gender prediction accuracy of a kernel classifier of the same type as the one applied during neutralization.}
\label{tab:pre-image-pred-full}
\end{table}

\clearpage

\subsection{Closest Neighbors}
\label{app:neighbors}
In \Cref{tab:neighbors}, we show the closest neighbors to randomly-sampled word representations before and after gender erasure under the polynomial kernel. 
The results for other kernels are qualitatively similar. 

\begin{table}[h]
\centering
\scalebox{0.75}{
\begin{tabular}{lll}
\toprule
          Word &                   Neighbors before &                    Neighbors after \\
\midrule
     spiritual &          faith, religious, healing &      emotional, religious, healing \\
        lesson &              learn, teach, lessons &           teaching, teach, lessons \\
         faces &                faced, facing, face &                faced, facing, face \\
        forget &                know, let, remember &                know, let, remember \\
     converter &          ipod, conversion, convert &          ipod, conversion, convert \\
         clean &               keep, wash, cleaning &               keep, wash, cleaning \\
        formal &        elegant, dress, appropriate &        elegant, appropriate, dress \\
      identity &  identify, context, identification &  context, identify, identification \\
         other &                 these, those, many &                 these, those, many \\
      licensed &     registered, certified, license &     registered, certified, license \\
       ratings &             reviews, rated, rating &             reviews, rated, rating \\
      properly &     proper, effectively, correctly &     effectively, proper, correctly \\
         build &            create, built, building &            built, create, building \\
     solutions &    systems, technologies, solution &   services, technologies, solution \\
   afghanistan &             troops, pakistan, iraq &             troops, pakistan, iraq \\
     wallpaper &         desktop, pictures, picture &         desktop, pictures, picture \\
         sound &               audio, noise, sounds &               audio, noise, sounds \\
        gender &                  sexual, male, age &             male, differences, age \\
          boat &              cruise, ship, fishing &              cruise, ship, fishing \\
      downtown &       portland, city, neighborhood &       portland, neighborhood, city \\
       lawyers &        attorney, lawyer, attorneys &        attorney, lawyer, attorneys \\
         smart &             how, easy, intelligent &            wise, easy, intelligent \\
      spending &               budget, spent, spend &               budget, spent, spend \\
       contest &       winners, winner, competition &       winners, winner, competition \\
          want &                    n't, know, need &                    n't, know, need \\
        advice &        guidance, suggestions, tips &        guidance, suggestions, tips \\
 professionals &    managers, professional, experts &    managers, professional, experts \\
             g &                            d, b, f &                            d, b, f \\
    australian &        zealand, british, australia &        zealand, british, australia \\
            na &                          mo, o, da &                          mo, o, da \\
\bottomrule
\end{tabular}

}
\caption{Closest neighbors to randomly-sampled words from GloVe vocabulary, for the original representations, and for the pre-images after our intervention. }
\label{tab:neighbors}
\end{table}

\clearpage

\subsection{WEAT Results}
\label{app:weat}
Here we report the results of the WEAT test for the career and family-related words (\Cref{tab:weat-family-profession}) and art and mathematics-related words (\Cref{tab:weat-math-art}).

\begin{table}[h]
\begin{center}
\begin{tabular}{llllll}
\toprule
    Kernel & $\gamma$ & $\alpha$ &  \degree &                \textsf{WEAT-$d$} &                  $p$-value \\
\midrule
      \poly &   0.05 &   0.8 &     2 &  0.72 $\pm$ 0.00 &  0.093 $\pm$ 0.002 \\
      \poly &   0.05 &     1 &     2 &  0.72 $\pm$ 0.01 &  0.091 $\pm$ 0.003 \\
      \poly &   0.05 &   1.2 &     2 &  0.72 $\pm$ 0.01 &  0.093 $\pm$ 0.004 \\
      \poly &   0.05 &   0.8 &     3 &  0.73 $\pm$ 0.00 &  0.089 $\pm$ 0.001 \\
      \poly &   0.05 &     1 &     3 &  0.74 $\pm$ 0.01 &  0.086 $\pm$ 0.004 \\
      \poly &   0.05 &   1.2 &     3 &  0.73 $\pm$ 0.01 &  0.089 $\pm$ 0.002 \\
      \poly &    0.1 &   0.8 &     2 &  0.72 $\pm$ 0.01 &  0.091 $\pm$ 0.005 \\
      \poly &    0.1 &     1 &     2 &  0.73 $\pm$ 0.00 &  0.089 $\pm$ 0.001 \\
      \poly &    0.1 &   1.2 &     2 &  0.72 $\pm$ 0.01 &  0.090 $\pm$ 0.003 \\
      \poly &    0.1 &   0.8 &     3 &  0.72 $\pm$ 0.01 &  0.090 $\pm$ 0.003 \\
      \poly &    0.1 &     1 &     3 &  0.72 $\pm$ 0.01 &  0.090 $\pm$ 0.003 \\
      \poly &    0.1 &   1.2 &     3 &  0.72 $\pm$ 0.01 &  0.091 $\pm$ 0.002 \\
      \poly &   0.15 &   0.8 &     2 &  0.71 $\pm$ 0.01 &  0.095 $\pm$ 0.004 \\
      \poly &   0.15 &     1 &     2 &  0.74 $\pm$ 0.00 &  0.087 $\pm$ 0.001 \\
      \poly &   0.15 &   1.2 &     2 &  0.72 $\pm$ 0.01 &  0.091 $\pm$ 0.002 \\
      \poly &   0.15 &   0.8 &     3 &  0.72 $\pm$ 0.00 &  0.093 $\pm$ 0.001 \\
      \poly &   0.15 &     1 &     3 &  0.72 $\pm$ 0.01 &  0.092 $\pm$ 0.002 \\
      \poly &   0.15 &   1.2 &     3 &  0.73 $\pm$ 0.01 &  0.090 $\pm$ 0.005 \\
       \rbf &    0.1 &  - &  - &  0.72 $\pm$ 0.01 &  0.090 $\pm$ 0.003 \\
       \rbf &   0.15 &  - &  - &  0.73 $\pm$ 0.01 &  0.090 $\pm$ 0.005 \\
       \rbf &    0.2 &  - &  - &  0.72 $\pm$ 0.01 &  0.091 $\pm$ 0.003 \\
   \laplace &    0.1 &  - &  - &  0.75 $\pm$ 0.05 &  0.083 $\pm$ 0.017 \\
   \laplace &   0.15 &  - &  - &  0.77 $\pm$ 0.02 &  0.076 $\pm$ 0.007 \\
   \laplace &    0.2 &  - &  - &  0.70 $\pm$ 0.02 &  0.098 $\pm$ 0.008 \\
    \linear &   - &  - &  - &  0.72 $\pm$ 0.02 &  0.090 $\pm$ 0.006 \\
   \sigmoid &  0.005 &     0 &  - &  0.73 $\pm$ 0.03 &  0.087 $\pm$ 0.010 \\
   \sigmoid &  0.005 &  0.01 &  - &  0.73 $\pm$ 0.02 &  0.087 $\pm$ 0.006 \\
   \sigmoid &  0.003 &     0 &  - &  0.76 $\pm$ 0.06 &  0.079 $\pm$ 0.019 \\
   \sigmoid &  0.003 &  0.01 &  - &  0.76 $\pm$ 0.07 &  0.080 $\pm$ 0.020 \\
   \easymkl &   - &  - &  - &  0.72 $\pm$ 0.02 &  0.091 $\pm$ 0.005 \\
 \uniformmk &   - &  - &  - &  0.72 $\pm$ 0.00 &  0.092 $\pm$ 0.002 \\
  \hline
 Original & - & - & - & 1.69 & 0.000 \\
\bottomrule
\end{tabular}

\end{center}
\caption{Word association bias test (WEAT) for career and family-related terms}
\label{tab:weat-family-profession}
\end{table}

\clearpage

\begin{table}[h]
\begin{center}
\begin{tabular}{llllll}
\toprule
    Kernel &  $\gamma$ & $\alpha$ &   \degree &                \textsf{WEAT-d} &                  $p$-value \\
\midrule
      \poly &   0.05 &   0.8 &     2 &  0.78 $\pm$ 0.00 &  0.068 $\pm$ 0.001 \\
      \poly &   0.05 &     1 &     2 &  0.78 $\pm$ 0.01 &  0.067 $\pm$ 0.002 \\
      \poly &   0.05 &   1.2 &     2 &  0.78 $\pm$ 0.00 &  0.067 $\pm$ 0.001 \\
      \poly &   0.05 &   0.8 &     3 &  0.78 $\pm$ 0.01 &  0.066 $\pm$ 0.002 \\
      \poly &   0.05 &     1 &     3 &  0.78 $\pm$ 0.01 &  0.066 $\pm$ 0.002 \\
      \poly &   0.05 &   1.2 &     3 &  0.78 $\pm$ 0.00 &  0.066 $\pm$ 0.001 \\
      \poly &    0.1 &   0.8 &     2 &  0.78 $\pm$ 0.00 &  0.068 $\pm$ 0.001 \\
      \poly &    0.1 &     1 &     2 &  0.78 $\pm$ 0.01 &  0.066 $\pm$ 0.002 \\
      \poly &    0.1 &   1.2 &     2 &  0.77 $\pm$ 0.01 &  0.069 $\pm$ 0.004 \\
      \poly &    0.1 &   0.8 &     3 &  0.78 $\pm$ 0.00 &  0.067 $\pm$ 0.001 \\
      \poly &    0.1 &     1 &     3 &  0.77 $\pm$ 0.01 &  0.070 $\pm$ 0.001 \\
      \poly &    0.1 &   1.2 &     3 &  0.78 $\pm$ 0.01 &  0.068 $\pm$ 0.002 \\
      \poly &   0.15 &   0.8 &     2 &  0.78 $\pm$ 0.01 &  0.068 $\pm$ 0.003 \\
      \poly &   0.15 &     1 &     2 &  0.78 $\pm$ 0.01 &  0.068 $\pm$ 0.003 \\
      \poly &   0.15 &   1.2 &     2 &  0.78 $\pm$ 0.00 &  0.068 $\pm$ 0.001 \\
      \poly &   0.15 &   0.8 &     3 &  0.78 $\pm$ 0.01 &  0.067 $\pm$ 0.002 \\
      \poly &   0.15 &     1 &     3 &  0.78 $\pm$ 0.01 &  0.067 $\pm$ 0.002 \\
      \poly &   0.15 &   1.2 &     3 &  0.77 $\pm$ 0.01 &  0.069 $\pm$ 0.002 \\
       \rbf &    0.1 &  - &  - &  0.79 $\pm$ 0.01 &  0.066 $\pm$ 0.003 \\
       \rbf &   0.15 &  - &  - &  0.78 $\pm$ 0.01 &  0.067 $\pm$ 0.002 \\
       \rbf &    0.2 &  - &  - &  0.78 $\pm$ 0.01 &  0.067 $\pm$ 0.002 \\
   \laplace &    0.1 &  - &  - &  0.80 $\pm$ 0.04 &  0.064 $\pm$ 0.012 \\
   \laplace &   0.15 &  - &  - &  0.81 $\pm$ 0.03 &  0.061 $\pm$ 0.009 \\
   \laplace &    0.2 &  - &  - &  0.77 $\pm$ 0.04 &  0.070 $\pm$ 0.013 \\
    linear &   - &  - &  - &  0.79 $\pm$ 0.01 &  0.066 $\pm$ 0.002 \\
   \sigmoid &  0.005 &     0 &  - &  0.82 $\pm$ 0.04 &  0.057 $\pm$ 0.010 \\
   \sigmoid &  0.005 &  0.01 &  - &  0.77 $\pm$ 0.05 &  0.070 $\pm$ 0.012 \\
   \sigmoid &  0.003 &     0 &  - &  0.76 $\pm$ 0.03 &  0.073 $\pm$ 0.009 \\
   \sigmoid &  0.003 &  0.01 &  - &  0.79 $\pm$ 0.03 &  0.066 $\pm$ 0.009 \\
   \easymkl &   - &  - &  - &  0.78 $\pm$ 0.00 &  0.066 $\pm$ 0.001 \\
 \uniformmk &   - &  - &  - &  0.78 $\pm$ 0.01 &  0.069 $\pm$ 0.002 \\
   \hline
 Original & - & - & - & 1.56 & 0.000 \\
\bottomrule
\end{tabular}

\end{center}
\caption{Word association bias test (WEAT) for art and mathematics-related terms}
\label{tab:weat-math-art}
\end{table}

\clearpage

\subsection{Transfer Results}
\label{app:trasnfer}

\begin{table}[h]
% CR updated
\scalebox{0.65}{

\begin{tabular}{lllllllll}
\toprule
{} &        UniformMK &          EasyMKL &              \rbf &             \poly &          \laplace &          \sigmoid &           \linear &              \mlp \\
\midrule
poly, $\gamma  = $0.05, \degree = 2, $\alpha =  $0.8       &  0.49 $\pm$ 0.00 &  0.98 $\pm$ 0.00 &  0.96 $\pm$ 0.00 &  0.98 $\pm$ 0.00 &  0.94 $\pm$ 0.01 &  0.49 $\pm$ 0.00 &  0.61 $\pm$ 0.06 &  0.97 $\pm$ 0.00 \\
poly, $\gamma  = $0.05, \degree = 2, $\alpha =  $1         &  0.49 $\pm$ 0.00 &  0.98 $\pm$ 0.00 &  0.96 $\pm$ 0.01 &  0.98 $\pm$ 0.00 &  0.94 $\pm$ 0.00 &  0.49 $\pm$ 0.00 &  0.52 $\pm$ 0.01 &  0.97 $\pm$ 0.00 \\
poly, $\gamma  = $0.05, \degree = 2, $\alpha =  $1.2       &  0.49 $\pm$ 0.00 &  0.98 $\pm$ 0.00 &  0.96 $\pm$ 0.00 &  0.98 $\pm$ 0.00 &  0.94 $\pm$ 0.00 &  0.49 $\pm$ 0.00 &  0.53 $\pm$ 0.02 &  0.97 $\pm$ 0.00 \\
poly, $\gamma  = $0.05, \degree = 3, $\alpha =  $0.8       &  0.49 $\pm$ 0.00 &  0.98 $\pm$ 0.00 &  0.96 $\pm$ 0.01 &  0.98 $\pm$ 0.00 &  0.93 $\pm$ 0.01 &  0.49 $\pm$ 0.00 &  0.54 $\pm$ 0.03 &  0.97 $\pm$ 0.00 \\
poly, $\gamma  = $0.05, \degree = 3, $\alpha =  $1         &  0.49 $\pm$ 0.00 &  0.98 $\pm$ 0.00 &  0.96 $\pm$ 0.00 &  0.98 $\pm$ 0.00 &  0.94 $\pm$ 0.01 &  0.49 $\pm$ 0.00 &  0.59 $\pm$ 0.05 &  0.97 $\pm$ 0.00 \\
poly, $\gamma  = $0.05, \degree = 3, $\alpha =  $1.2       &  0.49 $\pm$ 0.00 &  0.98 $\pm$ 0.00 &  0.96 $\pm$ 0.00 &  0.98 $\pm$ 0.00 &  0.94 $\pm$ 0.00 &  0.49 $\pm$ 0.00 &  0.53 $\pm$ 0.01 &  0.97 $\pm$ 0.00 \\
poly, $\gamma  = $0.1, \degree = 2, $\alpha =  $0.8        &  0.49 $\pm$ 0.00 &  0.98 $\pm$ 0.00 &  0.96 $\pm$ 0.00 &  0.98 $\pm$ 0.00 &  0.93 $\pm$ 0.01 &  0.49 $\pm$ 0.00 &  0.54 $\pm$ 0.04 &  0.97 $\pm$ 0.00 \\
poly, $\gamma  = $0.1, \degree = 2, $\alpha =  $1          &  0.49 $\pm$ 0.00 &  0.98 $\pm$ 0.00 &  0.96 $\pm$ 0.00 &  0.98 $\pm$ 0.00 &  0.93 $\pm$ 0.01 &  0.49 $\pm$ 0.00 &  0.53 $\pm$ 0.02 &  0.97 $\pm$ 0.00 \\
poly, $\gamma  = $0.1, \degree = 2, $\alpha =  $1.2        &  0.49 $\pm$ 0.00 &  0.98 $\pm$ 0.00 &  0.96 $\pm$ 0.00 &  0.98 $\pm$ 0.00 &  0.94 $\pm$ 0.00 &  0.49 $\pm$ 0.00 &  0.54 $\pm$ 0.01 &  0.97 $\pm$ 0.00 \\
poly, $\gamma  = $0.1, \degree = 3, $\alpha =  $0.8        &  0.49 $\pm$ 0.00 &  0.98 $\pm$ 0.00 &  0.96 $\pm$ 0.00 &  0.98 $\pm$ 0.00 &  0.94 $\pm$ 0.00 &  0.49 $\pm$ 0.00 &  0.54 $\pm$ 0.00 &  0.97 $\pm$ 0.00 \\
poly, $\gamma  = $0.1, \degree = 3, $\alpha =  $1          &  0.49 $\pm$ 0.00 &  0.98 $\pm$ 0.00 &  0.96 $\pm$ 0.00 &  0.98 $\pm$ 0.00 &  0.93 $\pm$ 0.00 &  0.49 $\pm$ 0.00 &  0.56 $\pm$ 0.02 &  0.97 $\pm$ 0.00 \\
poly, $\gamma  = $0.1, \degree = 3, $\alpha =  $1.2        &  0.49 $\pm$ 0.00 &  0.98 $\pm$ 0.00 &  0.96 $\pm$ 0.00 &  0.98 $\pm$ 0.00 &  0.93 $\pm$ 0.00 &  0.49 $\pm$ 0.00 &  0.54 $\pm$ 0.03 &  0.97 $\pm$ 0.00 \\
poly, $\gamma  = $0.15, \degree = 2, $\alpha =  $0.8       &  0.49 $\pm$ 0.00 &  0.98 $\pm$ 0.00 &  0.96 $\pm$ 0.00 &  0.98 $\pm$ 0.00 &  0.93 $\pm$ 0.00 &  0.49 $\pm$ 0.00 &  0.57 $\pm$ 0.02 &  0.97 $\pm$ 0.00 \\
poly, $\gamma  = $0.15, \degree = 2, $\alpha =  $1         &  0.49 $\pm$ 0.00 &  0.98 $\pm$ 0.00 &  0.96 $\pm$ 0.00 &  0.98 $\pm$ 0.00 &  0.94 $\pm$ 0.00 &  0.49 $\pm$ 0.00 &  0.54 $\pm$ 0.01 &  0.97 $\pm$ 0.00 \\
poly, $\gamma  = $0.15, \degree = 2, $\alpha =  $1.2       &  0.49 $\pm$ 0.00 &  0.98 $\pm$ 0.00 &  0.96 $\pm$ 0.00 &  0.98 $\pm$ 0.00 &  0.93 $\pm$ 0.00 &  0.49 $\pm$ 0.00 &  0.54 $\pm$ 0.01 &  0.97 $\pm$ 0.00 \\
poly, $\gamma  = $0.15, \degree = 3, $\alpha =  $0.8       &  0.49 $\pm$ 0.00 &  0.98 $\pm$ 0.00 &  0.96 $\pm$ 0.00 &  0.98 $\pm$ 0.00 &  0.93 $\pm$ 0.01 &  0.49 $\pm$ 0.00 &  0.58 $\pm$ 0.03 &  0.97 $\pm$ 0.00 \\
poly, $\gamma  = $0.15, \degree = 3, $\alpha =  $1         &  0.49 $\pm$ 0.00 &  0.98 $\pm$ 0.00 &  0.96 $\pm$ 0.00 &  0.98 $\pm$ 0.00 &  0.93 $\pm$ 0.00 &  0.49 $\pm$ 0.00 &  0.57 $\pm$ 0.01 &  0.97 $\pm$ 0.00 \\
poly, $\gamma  = $0.15, \degree = 3, $\alpha =  $1.2       &  0.49 $\pm$ 0.00 &  0.98 $\pm$ 0.00 &  0.96 $\pm$ 0.00 &  0.98 $\pm$ 0.00 &  0.93 $\pm$ 0.00 &  0.49 $\pm$ 0.00 &  0.56 $\pm$ 0.01 &  0.97 $\pm$ 0.00 \\
rbf, $\gamma  = $0.1         &  0.49 $\pm$ 0.00 &  0.98 $\pm$ 0.00 &  0.96 $\pm$ 0.00 &  0.98 $\pm$ 0.00 &  0.94 $\pm$ 0.00 &  0.49 $\pm$ 0.00 &  0.58 $\pm$ 0.04 &  0.97 $\pm$ 0.00 \\
rbf, $\gamma  = $0.15        &  0.49 $\pm$ 0.00 &  0.98 $\pm$ 0.00 &  0.96 $\pm$ 0.00 &  0.98 $\pm$ 0.00 &  0.94 $\pm$ 0.00 &  0.49 $\pm$ 0.00 &  0.60 $\pm$ 0.03 &  0.97 $\pm$ 0.00 \\
rbf, $\gamma  = $0.2         &  0.49 $\pm$ 0.00 &  0.98 $\pm$ 0.00 &  0.96 $\pm$ 0.00 &  0.98 $\pm$ 0.00 &  0.93 $\pm$ 0.01 &  0.49 $\pm$ 0.00 &  0.60 $\pm$ 0.04 &  0.97 $\pm$ 0.00 \\
laplace, $\gamma  = $0.1     &  0.60 $\pm$ 0.06 &  0.98 $\pm$ 0.00 &  0.96 $\pm$ 0.00 &  0.98 $\pm$ 0.00 &  0.93 $\pm$ 0.01 &  0.49 $\pm$ 0.00 &  0.61 $\pm$ 0.03 &  0.97 $\pm$ 0.00 \\
laplace, $\gamma  = $0.15    &  0.56 $\pm$ 0.04 &  0.98 $\pm$ 0.00 &  0.96 $\pm$ 0.00 &  0.98 $\pm$ 0.00 &  0.94 $\pm$ 0.00 &  0.49 $\pm$ 0.00 &  0.61 $\pm$ 0.01 &  0.97 $\pm$ 0.00 \\
laplace, $\gamma  = $0.2     &  0.59 $\pm$ 0.06 &  0.98 $\pm$ 0.00 &  0.96 $\pm$ 0.00 &  0.98 $\pm$ 0.00 &  0.94 $\pm$ 0.00 &  0.49 $\pm$ 0.00 &  0.62 $\pm$ 0.02 &  0.97 $\pm$ 0.00 \\
linear       &  0.49 $\pm$ 0.00 &  0.98 $\pm$ 0.00 &  0.96 $\pm$ 0.00 &  0.98 $\pm$ 0.00 &  0.93 $\pm$ 0.00 &  0.49 $\pm$ 0.00 &  0.54 $\pm$ 0.02 &  0.97 $\pm$ 0.00 \\
sigmoid, $\gamma  = $0.005, $\alpha =  $0    &  0.64 $\pm$ 0.05 &  0.97 $\pm$ 0.00 &  0.93 $\pm$ 0.01 &  0.98 $\pm$ 0.00 &  0.90 $\pm$ 0.01 &  0.49 $\pm$ 0.00 &  0.65 $\pm$ 0.05 &  0.97 $\pm$ 0.00 \\
sigmoid, $\gamma  = $0.005, $\alpha =  $0.01 &  0.63 $\pm$ 0.02 &  0.97 $\pm$ 0.00 &  0.94 $\pm$ 0.00 &  0.98 $\pm$ 0.00 &  0.90 $\pm$ 0.01 &  0.49 $\pm$ 0.00 &  0.65 $\pm$ 0.04 &  0.97 $\pm$ 0.00 \\
sigmoid, $\gamma  = $0.003, $\alpha =  $0    &  0.63 $\pm$ 0.07 &  0.97 $\pm$ 0.00 &  0.92 $\pm$ 0.01 &  0.98 $\pm$ 0.00 &  0.89 $\pm$ 0.01 &  0.49 $\pm$ 0.00 &  0.66 $\pm$ 0.07 &  0.97 $\pm$ 0.00 \\
sigmoid, $\gamma  = $0.003, $\alpha =  $0.01 &  0.65 $\pm$ 0.03 &  0.97 $\pm$ 0.00 &  0.92 $\pm$ 0.01 &  0.98 $\pm$ 0.00 &  0.89 $\pm$ 0.01 &  0.49 $\pm$ 0.00 &  0.65 $\pm$ 0.05 &  0.97 $\pm$ 0.00 \\
EasyMKL      &  0.49 $\pm$ 0.00 &  0.98 $\pm$ 0.00 &  0.96 $\pm$ 0.00 &  0.98 $\pm$ 0.00 &  0.94 $\pm$ 0.00 &  0.49 $\pm$ 0.00 &  0.57 $\pm$ 0.03 &  0.97 $\pm$ 0.00 \\
UniformMK    &  0.49 $\pm$ 0.00 &  0.98 $\pm$ 0.00 &  0.96 $\pm$ 0.00 &  0.98 $\pm$ 0.00 &  0.94 $\pm$ 0.01 &  0.49 $\pm$ 0.00 &  0.58 $\pm$ 0.08 &  0.97 $\pm$ 0.00 \\
\bottomrule
\end{tabular}
}
\caption{Gender prediction from the neutralized pre-image representations using non-linear adversaries that differ from the neutralizing kernel.}
\label{tab:pre-image-pred-diff}
\end{table}

In this appendix, we provide gender prediction accuracy, on the neutralized pre-image representations, with predictors that are \emph{different} from those used in training (Experiment \cref{sec:transfer}).

\paragraph{Setup.} After projecting out the gender concept in kernel space, and computing the pre-image of the neutralized representations, we apply different non-linear kernels as well as an MLP to predict gender. We use the following parameters:

\begin{itemize}
    \item \rbf\:: $\gamma=0.3$.
    \item \poly\:: $\degree\ =3$, $\gamma=0.5$, $\alpha=0.3$.
    \item \laplace\:: $\gamma=0.3$.
    \item \sigmoid\:: $\alpha=0$, $\gamma=0.01$.
    \item \mlp\:: A network with a single 128-dimensional hidden layer with ReLU activations.
\end{itemize}

All classifiers were trained using sklearn.

\paragraph{Results.} The results are shown in \Cref{tab:pre-image-pred-diff}. Rows denote the kernel that was applied for neutralization in \cref{eq:relaxed-game-nystrom}, while columns denote the type of adversarial classifier applied on the final pre-image representations. Numbers denote accuracy in gender prediction.

\end{document}